\documentclass{article} 
\usepackage{iclr2026_conference,times}

\usepackage{amsmath,amsfonts,bm}

\def\eqref#1{equation~\ref{#1}}

\def\1{\bm{1}}

\DeclareMathAlphabet{\mathsfit}{\encodingdefault}{\sfdefault}{m}{sl}
\SetMathAlphabet{\mathsfit}{bold}{\encodingdefault}{\sfdefault}{bx}{n}

\usepackage{hyperref}
\usepackage{url}

\usepackage{graphicx}
\usepackage{booktabs}
\usepackage{wrapfig}
\usepackage{multirow}
\usepackage{amssymb}
\usepackage{mathtools}
\usepackage{amsthm}

\theoremstyle{plain}
\newtheorem{theorem}{Theorem}[section]
\newtheorem{proposition}[theorem]{Proposition}

\newtheorem{corollary}[theorem]{Corollary}
\theoremstyle{definition}

\theoremstyle{remark}

\title{Improving Set Function Approximation with Quasi-Arithmetic Neural Networks}

\author{Tomas Tokar \\
University of Toronto\\
Wondeur AI\\
\texttt{tomas@wondeur.ai} \\
\And
Scott Sanner \\
University of Toronto \\
Vector Institute \\
\texttt{ssanner@mie.utoronto.ca} 
}

\iclrfinalcopy 
\begin{document}

\maketitle

\begin{abstract}
    Sets represent a fundamental abstraction across many types of data. 
    To handle the unordered nature of set-structured data, models such as DeepSets and PointNet rely on fixed, non-learnable pooling operations (e.g., sum or max) -- a design choice that can hinder the transferability of learned embeddings and limits model expressivity. More recently, learnable aggregation functions have been proposed as more expressive alternatives. In this work, we advance this line of research by introducing the Neuralized Kolmogorov Mean (NKM) -- a novel, trainable framework for learning a generalized measure of central tendency through an invertible neural function. We further propose quasi-arithmetic neural networks (QUANNs), which incorporate the NKM as a learnable aggregation function. We provide a theoretical analysis showing that, QUANNs are universal approximators for a broad class of common set-function decompositions and, thanks to their invertible neural components, learn more structured latent representations. Empirically, QUANNs outperform state-of-the-art baselines across diverse benchmarks, while learning embeddings that transfer effectively even to tasks that do not involve sets. 
\end{abstract}

\section{Introduction}

Sets represent a fundamental abstraction for various data types, examples of such include, set of objects placed in a scene~\citep{eslami2016attend,kosiorek2018sequential}, point clouds in a physical space~\citep{chang2015shapenet,wu20153d}, or a set of reinforcement learning agents~\citep{sunehag2017value}.  
Developing neural methods that can effectively learn to approximate set functions is therefore crucial for advancing machine learning applications across diverse domains.

Sets impose no inherent ordering, making the design of neural architectures for processing them a unique challenge. 
To properly handle this property, neural architectures must exhibit permutation invariance, i.e., ensure that the output remains unchanged regardless of the order of the input elements. 
To guarantee this, most existing approaches such as DeepSets~\citep{zaheer2017deep} and PointNet~\citep{qi2017pointnet} rely on pooling operations (e.g. sum, or max) that aggregate embeddings of the individual elements or their combinations into a single fixed-size representation, which is in turn processed to produce the final prediction~\citep{murphy2019janossy} (cf. Figure~\ref{fig:generalized_framework}).

\begin{wrapfigure}{r}{0.45\textwidth}
  \vspace{-\intextsep}
  \centering
  \includegraphics[width=0.45\textwidth]{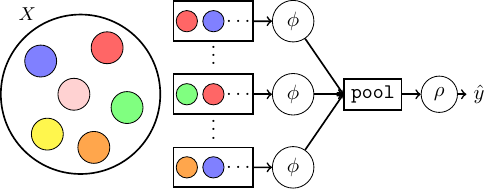} 
  \caption{Generalized framework for set function learning, involving encoder $\phi$, estimator $\rho$ and pooling operation (e.g. sum, or max).}
  \label{fig:generalized_framework}
  \vspace{-\intextsep}
\end{wrapfigure}

Because the pooling operation is fixed and non-trainable, the approximation burden is thus divided between the only two neural components: an \emph{encoder} $\phi$, which maps each set element (or their combinations) to a latent representation, and an \emph{estimator} $\rho$, which processes the pooled representation to produce the final prediction.
Therefore, the encoder is forced to learn embeddings that are tailored not only to the downstream task but also to the \textit{a priori} chosen pooling operation, restricting its ability to learn a more general or transferable latent representations~\citep{soelch2019deep,wagstaff2019limitations,bueno2021representation,wagstaff2022universal}.

More recently, learnable aggregation functions have been proposed as more expressive alternatives~\citep{locatello2020object,pellegrini2021learning,kimura2024permutation}. 
To advance this line of research, we introduce a trainable modification of the \textit{Kolmogorov mean} (a.k.a. quasi-arithmetic mean)~\citep{bullen2003quasi}, an important generalization of means that encompasses many commonly used measures of central tendency (e.g. arithmetic, geometric, harmonic mean, \textit{etc.}). 
The Kolmogorov mean induces set aggregation via a selection of invertible \textit{generating function}, which we implement using an invertible neural network, resulting in highly expressive learnable pooling operation.
The use of trainable Kolmogorov mean enables more principled set-function approximation, leading to more structured latent spaces, improved encoder transferability, and consistently better downstream performance.

\paragraph{Main contributions}
(i) We propose the first machine-learnable modification of the Kolmogorov mean.
(ii) We propose quasi-arithmetic neural networks (QUANNs), which adopt the newly introduced neuralized Kolmogorov mean to serve as a learnable aggregation function. 
(iii) We support our proposal with theoretical and empirical analysis, demonstrating that QUANNs: (a) can approximate mean- (in the generalized sense) and, under mild assumptions, also max-decomposable set functions arbitrarily well; (b) QUANNs enable learning of more structured embeddings and more transferable encoders than existing models; and (c) overall, they outperform state-of-the-art (SOTA) set function models.

\section{Preliminaries}
\label{sec:preliminaries}

\paragraph{Set-structured data} Let $\mathcal{X}$ denote the input space, and let $\mathcal{D} = \{ (\mathbf{X}_i, y_i) \}_{i=1}^N$ be a dataset where each $\textbf{X}_i$ is a finite set of elements: $\mathbf{X}_i = \{\mathbf{x}_{i, 1}, \ldots \mathbf{x}_{i, n_i}\}$, and $y_i$ is the corresponding label. 
For simplicity, we assume that each $\mathbf{X}_i$ is a homogeneous set, i.e., all elements of $\mathbf{X}_i$ come from the same domain $\mathcal{X}$: $\mathbf{x}_{ij} \in \mathcal{X}$. 
However, our framework can be naturally extended to heterogeneous sets. 
Additionally, we restrict our attention to the countable case, where all sets $\mathbf{X}_i$ are finite and of arbitrary cardinality $n_i$ that varies across the inputs.

\paragraph{Set function modeling}

We seek to learn a set function $F: \mathcal{P}_f(\mathcal{X}) \rightarrow \mathcal{Y}$ where $\mathcal{P}_f(\mathcal{X})$ denotes the set of all finite subsets of $\mathcal{X}$, and the output space $\mathcal{Y}$, which is usually either $\mathbb{R}$ (scalar case, e.g. regression), or $\mathbb{R}^{d_{\text{out}}}$ for some fixed output dimensionality $d_{\text{out}} \in \mathcal{N}$ (in the vector-valued prediction case, e.g. classification).
Our objective is to approximate the function $F$ using a neural network, trained on the dataset $\mathcal{D}$ by minimizing a loss function $\mathcal{L}$ obtained from the predicted outputs $\hat{F}(\mathbf{X}_i)$ and the labels $y_i \in \mathcal{Y}$. 

\paragraph{Permutation invariance} 

A key requirement for set function modeling is \emph{permutation invariance}, i.e., the output of the set function should remain unchanged under any reordering of the elements in the input set. 
Formally, we require the target function estimate $\hat{F}$ to satisfy:
\begin{equation}
\hat{F}(\mathbf{X}) = \hat{F}(\pi(\mathbf{X})) \quad \forall \mathbf{X} \subseteq \mathcal{X}, \; \forall \pi \in \mathcal{S}_{|\mathbf{X}|},
\end{equation}
where $\mathcal{S}_{|\mathbf{X}|}$ denotes the symmetric group of all permutations over the elements of the set $\mathbf{X}$, and $\pi(\mathbf{X})$ is the permutation of $\mathbf{X}$ under $\pi$. 

\begin{table}[ht]
  \caption{Unified comparison of the existing methods of Janossy pooling and QUANNs (ours); $\phi$, $\rho$, and $\psi$ are neural functions; $w$ indicates trainable parameter; $P_2(\mathbf{X})$ indicates all 2-permutations (pairs) of the input set and $n$ is set cardinality.}
  \label{tab:janossy_methods}
  \centering
  \resizebox{1.0\textwidth}{!}{
  \begin{tabular}{llll}
\toprule
\small
& \textbf{MODEL} & \textbf{FUNCTIONAL FORM} & \textbf{REF.} \\
\midrule
\textbf{Unary} & DeepSets & $\rho\left( \sum_{i=1}^{n} \phi(\mathbf{x}_i) \right)$ & \cite{zaheer2017deep} \\
& PointNet & $\rho\left( \max_{i=1}^{n} \phi(\mathbf{x}_i) \right)$ & \cite{qi2017pointnet} \\
& Normalized DeepSets & $\rho\left(1/n\sum_{i=1}^{n} \phi(\mathbf{x}_i) \right)$ & \cite{bueno2021representation} \\ 
& Hölder’s Power DeepSets & $\rho\left(1/n\sum_{i=1}^{n} \phi(\mathbf{x}_i)^{w}\right)^{1/w}$ & \cite{kimura2024permutation} \\
& QUANN-1 & $\rho\left( \psi^{-1} \left(1/n\sum_{i=1}^{n} \psi(\phi(\mathbf{x}_i)) \right) \right)$ & (ours) \\

\vspace{2\baselineskip}\\

\textbf{Binary} & SetTransformer & $\rho \left(\sum_{(\mathbf{x}_i, \mathbf{x}_j) \in P_2(\mathbf{X})} \phi(\mathbf{x}_i, \mathbf{x}_j) \right)$ & \cite{lee2019set} \\
& QUANN-2 & $\rho \left( \psi^{-1} \left(\sum_{(\mathbf{x}_i, \mathbf{x}_j) \in P_2(\mathbf{X})} \psi(\phi(\mathbf{x}_i, \mathbf{x}_j)) \right) \right)$ & (ours) \\
\bottomrule
\end{tabular}
  }
\end{table}

\section{Related work}

\subsection{Janossy pooling}

Most current methods for learning set functions can be viewed as specific instances of a broader generalization referred to as  \emph{Janossy pooling}~\citep{murphy2019janossy}, which can be formalized as follows: 
\begin{equation}\label{eq:janossy}
    \hat{F}(\mathbf{X}) = \rho\left(\text{pool}_{\pi \in \mathcal{P}_k(\mathbf{X})} \phi\left(\pi(\mathbf{X})\right)\right)
\end{equation}
where: $\mathcal{P}_k(\mathbf{X})$ indicates all $k$-permutations of a set $\mathbf{X}$; \emph{encoder} $\phi$ is a neural function that learns latent embedding of the input permutations; $\text{pool}$ is a permutation-invariant \textit{pooling operation} that aggregates the obtained embeddings, and \textit{estimator} $\rho$ is another neural function that maps an aggregated embeddings to the final output. 

Any neural network that can be factorized in the above form, is said to be a $k$-ary Janossy pooling, distinct by its choice of $k$ and the adopted pooling operation (cf. Table~\ref{tab:janossy_methods}).
Below we discuss several important specific cases of Janossy pooling.
It is important to note that alternative generalizations of the neural set functions were proposed~\citep{kim2021transformers}. 

\paragraph{Unary pooling networks}

Some of the earliest permutation-invariant deep learning architectures for set function modeling were DeepSets~\citep{zaheer2017deep} and PointNet~\citep{qi2017pointnet}. 
Both architectures can be viewed as special cases of unary Janossy pooling, where the permutation invariance is secured by using summation and max pooling, respectively. 

\paragraph{Binary pooling networks}

Obviously, the unary pooling networks fail to capture the relationships between elements of the set, which imposes an important limitation. 
Inspired by the success of transformer architecture in various other application,\citet{lee2019set} proposed to addresses this limitation by introducing a SetTransformer, which takes into account the relationships between two elements in the input set via attention mechanism.
However, SetTransformer can be factorized into a functional form in the equation~\ref{eq:janossy} with $k = 2$~\citep{wagstaff2022universal}, and thus constitutes a binary Janossy pooling.

\paragraph{Janossy pooling with learnable pooling function}

Clearly, the pooling function is the only non-learnable component in Janossy pooling (Eq.~\ref{eq:janossy}).
This was first highlighted in~\citet{soelch2019deep} where the authors proposed using recurrent neural networks with attention as learnable alternative.
However, the method's overall complexity, further aggravated by rather unclear implementation, limits its applicability to most real world datasets.
\citet{kimura2024permutation} proposed Hölder’s Power DeepSets (HPDS) that uses \textit{power mean} with learnable exponent as the pooling function.
Closely related to this work is the learnable aggregation function proposed by \citet{pellegrini2021learning}, in which four distinct power means are fused through a fixed-form aggregation scheme with additional learnable parameters.
However, this method does not admit a strict Janossy factorization.

\subsection{Non-Janossy methods}
\label{sec:non_janossy_methods}

Not all permutation-invariant architectures can be factorized into the functional form described in Eq.~\ref{eq:janossy}. 
These non-Janossy approaches can be broadly grouped into two conceptually distinct categories: models that achieve permutation invariance through slot attention mechanisms and models that seek an optimal ordering of the input set.

\paragraph{Slot attention}

Slot attention models learn set representations by binding a fixed number of learnable ``slots'' to input set elements, using attention~\citep{locatello2020object}. 
Each slot acts as a latent variable that competes to explain different aspects of the input through soft assignment and recurrent refinement. 
The input elements are transformed via linear projections and then aggregated through a weighted mean, where the attention coefficients serve as the weights.
Multiple modification and extensions were later proposed~\citep{skianis2020rep,wu2022slotformer,jia2023improving,seitzer2023bridging,zhang2023unlocking}.

\paragraph{Permutation optimization and sorting} 

\citet{zhang2019learning} addressed permutation invariance by explicitly computing the optimal permutation of the input set with respect to a learnable cost function, thereby aligning elements before aggregation. 
In contrast,\citet{zhang2020fspool} proposed a simpler approach, where inputs are independently sorted along each feature dimension and subsequently fed to the encoder.

\section{Neuralized Kolmogorov mean}
\label{sec:nkm}

In this section, we introduce \textit{neuralized Kolmogorov means} (NKM) -- a novel framework for aggregating sets of embeddings via a learnable measure of central tendency.
In the following section we show how NKMs can be used as learnable pooling operation to improve set function approximation.

\paragraph{Kolmogorov mean}
The Kolmogorov mean, also known as the quasi-arithmetic mean, is a \textit{measure of central tendency} that generalizes various common means. 
Given a continuous and invertible function $f$, the Kolmogorov mean of a set of numbers $\{x_i\}^{n}_{i=1}$ is defined as: $M_f = f^{-1}\big(1/n\sum^n_i f(x_i)\big)$, where the function $f$ is usually referred to as \textit{generating function}.
Since the generating function $f$ is continuous and invertible, it must be a strictly monotonic function on its domain, which guarantees that $M_f$ is bounded between $\min(x)$ and $\max(x)$.
Special cases of Kolmogorov mean include, arithmetic ($f(x) = a x + b$), geometric  ($f(x) = \log{x}$), and power means ($f(x) = x^p$). 

\paragraph{Neuralization of Kolmogorov mean}
We propose a neuralized form of the Kolmogorov mean, where the generating function is implemented by an invertible neural network $\psi$: 
\begin{equation}
\label{eq:kolmogorov}
    M_{\psi}(\mathbf{X}) = \psi^{-1}\big(\frac{1}{n}\sum_{i=1}^n \psi(\mathbf{x}_i)\big)    
\end{equation}
This way we obtain learnable measure of central tendency, which we refer to as neuralized Kolmogorov mean (NKM).
Despite the broad attention given by the scientific community~\citep{kimura2024permutation}, to our knowledge, \textit{we are the first to propose, not only neuralized, but any machine-learnable modification of the Kolmogorov mean}. 
Conceptually related, yet complementary, is the use of invertible neural functions for neuralization of semirings proposed by ~\citet{dos2021neural}.

\paragraph{Choice of invertible architecture}

In all our experiments, we used the RevNet~\citep{gomez2017reversible} architecture to implement the generating function $\psi$ of the NKM. Alternative choice of the invertible architecture in NKM were explored, but did not lead to a qualitative change in the models’ performance (cf. Appendix~\ref{fig:rev_net_real_nvp_performance}).

\section{Quasi-arithmetic neural networks}
\label{sec:quanns}

Intuitively, introducing learnable pooling operation to set function models should enhance the expressiveness of their hypothesis spaces.
Therefore, we propose adopting the hereby introduced NKMs (cf. Section~\ref{sec:nkm}) as learnable pooling operation, resulting in a novel class of models, which we refer to as Quasi-Arithmetic Neural Networks (QUANNs). Note that we chose a distinct name to emphasize that NKM represents a general framework for learnable aggregation of latent codes, whereas QUANNs are specifically designed for set function learning.

\subsection{Rationale}

Any neural network that approximates a set function $F(\mathbf{X})$ using the following factorization is referred to as a k-ary QUANN, where the value of $k$ distinguishes the degree of interactions it models:

\begin{equation}\label{eq:quann}
    \hat{F}(\mathbf{X}) = \rho\Big[\psi^{-1}\Big(\frac{1}{|P_k(\mathbf{X})|}\sum_{\pi \in P_k(\mathbf{X})} \psi(\phi(\pi))\Big)\Big]
\end{equation}
where $\phi$, $\rho$ are any arbitrary neural functions, while $\psi$ is invertible neural function to serve as generative function of the NKM. 
In more compact form, we can write $\rho(M_{\psi, \phi}(\mathbf{X}))$, where $M_{\psi, \phi}(\mathbf{X})$ indicates NKM induced via $\psi$ using encoder $\phi$.

Depending on the choice of $k$ we thus propose two specific instances of QUANN models: (i) unary pooling network QUANN-1 encoding individual set elements ($P_1(\mathbf{X}) = \mathbf{X}$); and (ii) binary pooling network QUANN-2 encoding interactions between the set element pairs via attention mechanism (similar to SetTransfomer)(cf. Table~\ref{tab:janossy_methods}).

\subsection{Approximation of the common decompositions}
\label{sec:approximations}

\begin{theorem}[Universal Approximation of QUANNs]
\label{theorem:quanns_uat}
Let $\mathcal{U}$ denote the set of all permutation-invariant set functions $F:\mathcal{P}_f(X) \to \mathcal{Y}$ that can be uniformly approximated by Quasi-Arithmetic Neural Networks of the form~\ref{eq:quann}, where $\rho$, $\psi$ and $\phi$ are arbitrary neural networks.  
Let $\mathcal{U}_W$ denote the set of all permutation-invariant set functions $F:\mathcal{P}_f(X) \to \mathcal{y}$ that are uniformly continuous with respect to the Wasserstein metric. Then $\mathcal{U} \supseteq \mathcal{U}_W$.
\end{theorem}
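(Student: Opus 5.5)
The plan is to reduce the statement to the known universality of normalized (mean-pooled) DeepSets for Wasserstein-continuous set functions, as established by \citet{bueno2021representation}. Their result states: every permutation-invariant $F$ that is uniformly continuous with respect to the Wasserstein metric on the empirical measures $\mu_{\mathbf{X}} = \frac{1}{n}\sum_i \delta_{\mathbf{x}_i}$ can be uniformly approximated, over sets of arbitrary finite cardinality, by networks of the form $\rho\big(\frac{1}{n}\sum_{i=1}^n \phi(\mathbf{x}_i)\big)$. Given this, it suffices to show that every Normalized DeepSets network lies in, or in the closure of, the QUANN hypothesis class of Eq.~\ref{eq:quann} with $k=1$. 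Uniform approximation is transitive, so $\mathcal{U}_W \subseteq \mathcal{U}$ then follows.

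The main step is a reparametrisation absorbing the NKM into the outer networks. Fix a target $F\in\mathcal{U}_W$ and $\varepsilon>0$, and take a Normalized DeepSets network $\rho_0\big(\frac1n\sum\phi_0(\mathbf{x}_i)\big)$ within $\varepsilon/2$ of $F$. For any invertible $\psi$ we can write
\[
\rho_0\Big(\tfrac1n\textstyle\sum_i\phi_0(\mathbf{x}_i)\Big)=(\rho_0\circ\psi)\Big[\psi^{-1}\Big(\tfrac1n\textstyle\sum_i\psi\big(\psi^{-1}(\phi_0(\mathbf{x}_i))\big)\Big)\Big].
\]
This is exactly a QUANN-1 with $\phi=\psi^{-1}\circ\phi_0$ and $\rho=\rho_0\circ\psi$. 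The simplest choice is $\psi=\mathrm{id}$, which is representable exactly by a RevNet whose coupling sub-networks output zero. With that choice $\phi=\phi_0$ and $\rho=\rho_0$ are unchanged, and the QUANN coincides with the DeepSets network. Since $\phi,\rho$ are arbitrary neural networks, no further approximation is needed and the error stays at $\varepsilon/2<\varepsilon$.

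If one insists that $\psi$ be a nontrivial learned map, the same identity shows that composing $\phi_0$ with $\psi^{-1}$ and $\rho_0$ with $\psi$ keeps the function class unchanged. The only remaining check is then that $\psi^{-1}\circ\phi_0$ and $\rho_0\circ\psi$ are again (approximable by) neural networks on the relevant compact sets. This holds because compositions of continuous maps on compacta are continuous, so the classical universal approximation theorem applies to each with an $\varepsilon/4$ budget. Uniform continuity of $\rho_0$ on the compact convex hull of $\phi_0(\mathcal{X})$ then controls error propagation.

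I expect the main obstacle to be stating the cited Wasserstein result in exactly the needed form: compact $\mathcal{X}$, uniformity over all cardinalities $n$, and a latent dimension that may be large but is fixed. One must also check that the mean inside the NKM stays in the domain of $\psi^{-1}$; this is automatic for $\psi$ a bijection of $\mathbb{R}^d$, such as a RevNet. For $k>1$ the result is not needed, since the $k=1$ subclass already suffices (QUANN-2 can emulate unary encoders by ignoring the second argument).
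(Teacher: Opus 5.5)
Your proof is correct and follows essentially the same route as the paper: invoke the mean-pooling (Normalized DeepSets) universality result of Bueno and Hylton for Wasserstein-uniformly-continuous set functions, then specialize the QUANN to $\psi = \mathrm{id}$ so that each such network is itself a QUANN. Your version uses only the inclusion actually needed, whereas the paper additionally asserts the unnecessary equality $\mathcal{U}_{\psi=\mathbb{I}} = \mathcal{U}_W$.
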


\paragraph{Means and central tendencies}

QUANNs are fundamentally mean-pooling architectures, and as such, they are able to \emph{exactly represent}  any function that is \emph{uniformly continuous with respect to the Wasserstein metric} (cf. Theorem~\ref{theorem:quanns_uat}).
This means that any continuous function that depends smoothly on the empirical distribution of the input set, rather than single extreme point, can be approximated arbitrarily well using QUANNs.
These involve various types of means and other central tendencies (e.g. median, quantiles).

This also means that QUANNs, in principle, are not able to exactly represent target set functions that are sum- or max-decomposable. 
However, in the context of set function learning, exact representation is seldom the objective; instead, the central goal is often to \emph{approximate} a target function with sufficient accuracy~\citep{wagstaff2022universal}.
To this end, we analyze QUANNs approximation capacity with respect to these decompositions.

\paragraph{Max decomposition}

Under mild assumptions, QUANNs can approximate max-decomposable set functions with worst-case approximation error that scales favorably with the cardinality of the input set (cf. Proposition~\ref{prop:max_approximation} -- \emph{Approximation of Max-Decomposable Set Function}). 
This is achieved by learning a generating function $\psi$ that falls into an exponential function family (cf. Table~\ref{tab:psi_family_examples}). 
In which case, for finite-size sets, QUANNs can attain arbitrarily small approximation error.

\begin{wraptable}{r}{0.45\textwidth}
    \vspace{-0.25in}
    \small
    \centering
    \caption{Order of approximation $\mathcal{E}$ of mean, sum and max-decomposable set functions $F(x)$ that QUANNs can achieve by learning $\psi$ that falls under the given function family.}
    \label{tab:psi_family_examples}  
    \begin{tabular}{lll}
     \toprule
     $F(X)$ & $\psi$ & $\mathcal{E}$ \\
     \midrule
     $a \circ \text{mean} \circ b$ & $w_1 \mathbf{x} + w_2$ & $\mathcal{O}(1)$ \\
     $a \circ \text{max} \circ b$  & $\exp{(w\mathbf{x})}$ & $\mathcal{O}(1/w\log{n})$ \\
     $a \circ \text{sum} \circ b$  & $\cdot$ & $\mathcal{O}({n})$ \\
     \bottomrule
\end{tabular}
    \vspace{0.1in}
\end{wraptable}

\paragraph{Sum decomposition}

QUANNs exhibit limited capacity in approximating sum-decomposable set functions due to the inherently expansive nature of summation, which poses a principal challenge for neural architectures designed to preserve permutation invariance via normalization-based pooling~\citep{bueno2021representation}. 
Specifically, we demonstrate that, regardless of the particular function $\psi$ learned by the model, the approximation error tends to grow linearly with the cardinality $n$ of the input set, thus constraining the model's performance in sum-structured tasks (cf. Proposition~\ref{prop:sum_approximation} -- \emph{Approximation of Sum-Decomposable Set Function}).
However, QUANNs can still approximate such functions so that \textit{the expected approximation error becomes arbitrarily small} -- by letting the encoder $\phi$ ``absorb'' the expected value of the inputs cardinality and thus to appropriately rescale the learned NKM (cf. Proposition~\ref{prop:sum_approximation_extended} -- \emph{Expected value of the Approximation of Sum-Decomposable Set Function}).

\subsection{Theoretical benefits over Normalized DeepSets}
\label{sec:benefits_over_normdeepset}

A careful reader may notice a formal similarity between the functional form of the normalized DeepSet and that of the proposed QUANN model. 
Specifically, by introducing substitutions $\rho' = \rho \circ \psi^{-1}$ and $\phi' = \psi \circ \phi$, the functional form of QUANNs can be rewritten in a form that is functionally equivalent to normalized DeepSets: $\rho'(1/n\sum_{i = 1}^n \phi'(\mathbf{x}_i))$. 
This may suggest that QUANNs and Normalized DeepSets are essentially the same model class. 
However, in such case, $\rho'$ and $\phi'$ would not be independent functions, but functional compositions connected via component $\psi$ and its inverse.
This distinction confers several important advantages to the QUANN architecture. 

\paragraph{Increased expressivity under fixed $\rho$ and $\phi$}

First, under any fixed choice of the encoder $\phi$ and the estimator $\rho$ in a normalized DeepSet, introducing the additional learnable and invertible transformation $\psi$ strictly increases the expressivity of the model. 
This expansion of the hypothesis space can be particularly valuable in practical scenarios where one or more components (e.g., the encoder $\phi$) are pre-trained, such as in transfer learning.

\paragraph{Natural regularization via $\psi$}

Enforcing invertibility on $\psi$ acts as a natural form of regularization. 
We hypothesize that this constraint improves the generalization capability of the model by encouraging the network to learn a $\psi$ that aligns well with the intrinsic structure or decomposition of the target set function $F(\mathbf{X})$; independent of the specific choices of the encoder or estimator networks. 

\paragraph{Dimensional collapse prevention} 

To better illustrate this point, consider the dervivative of the function $\hat{F}(\mathbf{X})$ with respect of the $w_\phi$, the parameter of the encoder $\phi$.
For simplicity, we assume that $\hat{F}(\mathbf{X})$ is factorized as QUANN-1 model.
Using the \emph{inverse function theorem} together with the derivative of Kolmogorov mean (cf. Appendix~\ref{sec:derivative_of_km}), we obtain:
\begin{align}
    \frac{\partial \hat{F}(\mathbf{X})}{\partial w_\phi}     
    &= \mathbf{J}_\rho(M_{\psi, \phi}) \mathbf{J}^{-1}_{\psi}(M_{\psi, \phi}) \frac{1}{|\mathbf{X}|}\sum_{\mathbf{x} \in \mathbf{X}} \mathbf{J}_\psi(\phi(\mathbf{x})) \frac{\partial \phi(\mathbf{x})}{\partial w_\phi} \\
    &= \mathbf{J}_\rho(M_{\psi, \phi})\sum_{\mathbf{x} \in \mathbf{X}} \frac{1}{|\mathbf{X}|} \mathbf{J}^{-1}_{\psi}(M_{\psi,\phi}) \mathbf{J}_\psi(\mathbf{\phi(\mathbf{x})}) \frac{\partial \phi(\mathbf{x})}{\partial w_\phi} \\
    &= \mathbf{J}_\rho(M_{\psi, \phi}) \sum_{\mathbf{x} \in \mathbf{X}} \frac{\partial M_{\psi, \phi}}{\partial \phi(\mathbf{x})} \frac{\partial \phi(\mathbf{x})}{\partial w_\phi}\label{eq:gradient_quanns}
\end{align}
where $\mathbf{J}_\rho$ denotes the Jacobian of the estimator $\rho$.
During training, the derivative of Kolmogorov mean $\partial M_{\psi, \phi} / \partial \phi(\pi)$ scales changes in the latent representations of the elements $\mathbf{x}$ proportionally to their ``leverage'' on the representation of the entire input set -- $M_{\psi, \phi}$. 
Because $\psi$ is invertible, derivative of Kolmogorov mean is \emph{non-singular}, ensuring that no latent dimension is collapsed to zero when aggregating gradients across the inputs.  Consequently, information from each dimension is preserved during backpropagation, effectively \textit{preventing dimensional collapse}~\citep{jing2022understanding}.

\paragraph{Structured latent representations}

If $\psi$ is learned to be locally monotonic around $M_{\psi,\phi}$, 
then the derivative $\partial M_{\psi,\phi} / \partial \phi(\pi)$ in Equation~\ref{eq:gradient_quanns} is \emph{positive semidefinite}. 
In this case, $M_{\psi,\phi}$ tends to \emph{follow} the gradients computed across the inputs 
and stays within a \emph{ball} around the latent representations of the input set elements~\citep{nielsen2023beyond}, leading to more structured latent representations.

\section{Experimental design}

\paragraph{Research questions}

We hypothesize that the theoretical advantages established in the previous section enable QUANNs to learn more transferable encoders and achieve superior performance compared to state-of-the-art methods. Based on this hypothesis, we formulate the following research questions.
\textbf{RQ1:} Does the proposed neural function factorization described by Equation~\ref{eq:quann} 
improve the approximation of common set function decompositions compared to an equivalent model without the use of invertible neural networks? 
\textbf{RQ2:} Does the proposed approach learn a better encoder $\phi$, 
by (a) producing qualitatively improved embeddings compared to those trained using baseline methods, 
(b) reducing the need for fine-tuning pre-trained encoders in downstream set function learning tasks, 
and (c) enabling the learned encoders to be transferable to non-set tasks? 
\textbf{RQ3:} Do QUANNs outperform state-of-the-art methods in set function learning tasks across diverse experimental conditions?

\paragraph{Baselines} 

To evaluate the performance of our proposed method, we compared it against a diverse set of Janossy pooling models. 
We included three unary pooling models: \textbf{DeepSets}~\citep{zaheer2017deep} and \textbf{PointNet}~\citep{qi2017pointnet}, which are widely regarded as foundational methods, and \textbf{Normalized DeepSets}~\citep{bueno2021representation}, which serves as a representative method for learning average decomposable functions. 
As a representative of Janossy pooling methods with a learnable pooling function, we employed the \textbf{Hölder Power Deep Set (HPDS)} model~\citep{kimura2024permutation}. 
For binary pooling architectures, we used the \textbf{SetTransformer}~\citep{lee2019set}, which conveys set function approximation via pairwise attention-based interactions between set elements. 
In addition, we compared QUANNs with three representatives of non-Janossy methods: \textbf{Slot attention (SlotAtt)}~\citep{locatello2020object}, \textbf{FSPool}~\citep{zhang2020fspool} and \textbf{LAF}~\citep{pellegrini2021learning}.

\paragraph{Datasets}

To evaluate our method against baseline models, we adopted one \textbf{synthetic dataset} and three publicly available real-world datasets: \textbf{MNIST}~\citep{lecun1998gradient}, \textbf{Omniglot}~\citep{lake2015human} and \textbf{ModelNet40}~\citep{wu20153d}.
Since MNIST and Omniglot, are not originally designed for set function modeling, similar to previous works~\citep{zaheer2017deep,lee2019set}, we introduced modifications to adapt these datasets to the set-based setting. The specific preprocessing procedures and task definitions are detailed in the Section~\ref{sec:datasets_supp}.

\section{Results}

\subsection{Synthetic data experiments}
\label{sec:synthetic_data_exp}

We designed a synthetic data experiment using randomly generated point clouds and 10 different vector aggregation functions, which models were supposed to approximate (cf. Section~\ref{sec:datasets_supp}).
We evaluated QUANN-1 against three models: Ablation 1, in which the transformation $\psi$ was replaced with the identity function: $\psi(\mathbf{x}) = \mathbf{x}$, Ablation 2, which also replaced $\psi$ with the identity function, but extended the encoder $\phi$ and the estimator $\rho$ networks such that the total number of trainable parameters and the model depth approximately matched those of QUANN-1; and Ablation 3, which removed $1/|P_k(\mathbf{X})|$ normalization in the Equation~\ref{eq:quann}.
QUANN-1 outperformed the ablated models in 9 out of 10 experiments (RQ1).

\begin{table}[h!]
    \small
    \centering
    \caption{Results of the synthetic data experiments. Performance of the proposed model (QUANN-1) and the three ablated models when approximating 10 different set aggregation functions, as measured by mean squared error (MSE, lower is better). QUANN-1 outperformed ablations in 9 out of 10 experiments (highlighted in bold); with significant improvements in 7 of them.
    }
    \label{tab:synthetic_res}  
    \resizebox{0.75\textwidth}{!}{
    \begin{tabular}{lrrrrrrrr}
\toprule
 & \multicolumn{2}{c}{\textbf{Ablation 1}} & \multicolumn{2}{c}{\textbf{Ablation 2}} & \multicolumn{2}{c}{\textbf{Ablation 3}} & \multicolumn{2}{c}{\textbf{QUANN-1}} \\
\cmidrule(r){2-3} \cmidrule(r){4-5} \cmidrule(r){6-7} \cmidrule(r){8-9}
 & Mean & Std & Mean & Std & Mean & Std & Mean & Std \\
\midrule
Geometric median & 4.521 & 0.213 & 14.955 & 0.704 & 14.096 & 1.228 & \textbf{4.138} & 0.453 \\
Quadratic mean & 1.011 & 0.122 & 3.643 & 0.101 & 6.667 & 0.805 & \textbf{0.988} & 0.111 \\
Median & 4.628 & 0.135 & 15.236 & 0.781 & 14.171 & 1.028 & \textbf{4.104} & 0.383 \\
Medoid & 18.725 & 0.723 & 31.075 & 1.243 & 27.179 & 1.337 & \textbf{17.312} & 0.667 \\
Midpoint & 25.914 & 0.571 & 38.691 & 0.915 & 33.578 & 0.965 & \textbf{24.236} & 0.497 \\
VecMaxNorm & 94.926 & 2.440 & 137.045 & 4.350 & 114.042 & 1.933 & \textbf{89.499} & 2.363 \\
Max & 20.929 & 0.817 & 43.143 & 1.854 & 45.038 & 6.135 & \textbf{13.583} & 0.565 \\
LogSumExp & 19.242 & 1.078 & 41.386 & 1.463 & 43.969 & 7.656 & \textbf{12.557} & 0.856 \\
Variance & 132.388 & 10.578 & 105.119 & 9.594 & 589.282 & 95.461 & \textbf{74.128} & 15.605 \\
Skewness & 0.027 & 0.001 & 0.027 & 0.001 & 0.074 & 0.032 & 0.027 & 0.001 \\
\bottomrule
\end{tabular}
    }
\end{table}

\subsection{MNIST-sets experiments}

\begin{wrapfigure}{r}{0.45\textwidth} 
  \vspace{-\intextsep}
  \centering  
  \includegraphics[width=0.45\textwidth]{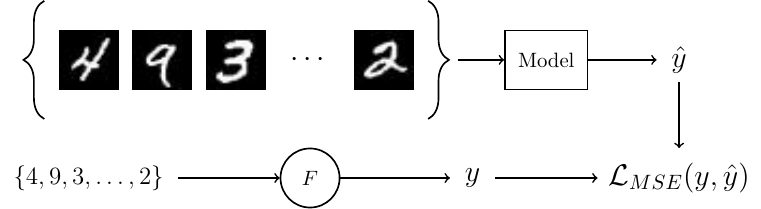} 
  \caption{Experiment with aggregation of MNIST digits. The goal is to approximate the function $F$ by learning to estimate its outputs.}
  \label{fig:agg_mnist_digits}
  \vspace{-\intextsep}
\end{wrapfigure}

We expanded the sum-of-MNIST-digits experiment, originally performed in~\citet{zaheer2017deep}, by adding other aggregating functions, including \texttt{max}, \texttt{mean}, \texttt{mode}, etc.
The model task is to learn to approximate the MNIST digits aggregation function $F$ by learning to estimate the function outputs (cf. Figure~\ref{fig:agg_mnist_digits}).
The experiments were performed under two experimental setups: using generic encoders, which was followed by the qualitative analysis; and using pre-trained encoders (neural function $\phi$).

\paragraph{Using generic encoders}

The MNIST-set experiments were first conducted using generic encoders, where the models had to learn the MNIST image embeddings from scratch alongside other neural components. 
According to the obtained results (cf. Table~\ref{tab:performance}), QUANN-1 outperformed all unary baselines across all 5 aggregation tasks. 
The second proposed model, QUANN-2, achieved overall the best performance across all models evaluated.
These experiments were further expanded by adding five measures of central tendency; providing further validation of the benefits of QUANN models (cf. Supplementary Table~\ref{tab:mnist_results_supp}).

\paragraph{Qualitative analysis}

To assess the models' ability to learn meaningful input representations, we extracted the image encoders $\phi$ from the Janossy models trained in the previous experiments and applied them to embed the MNIST test set images. 
The resulting embeddings were subsequently projected into a two-dimensional space using UMAP~\citep{mcinnes2018umap}. The obtained projections were then plotted to reveal the structure of the embeddings (cf. Figure~\ref{fig:mnist_umaps}). The results show that only the embeddings obtained from QUANNs, and SetTransformer, but not other models, consistently produced class-wise separable structures across all tasks, demonstrating their ability to learn more structured representations (RQ2a).

\paragraph{Using pre-trained encoders} 

The experiments were then repeated using pre-trained MNIST image encoders (neural function $\phi$). 
Specifically, we first trained a standalone MNIST image classifier composed of a small convolutional neural network followed by an MLP predictor, achieving 98\% test set accuracy. 
The trained encoder was transferred to the set function models, with its weights being fixed. 
The models were then trained in the same manner as in the previous experiments.

Since the encoder is trained to produce class-separated embeddings and not to recover quantitative relations between the images, we expected that the performance of the set function models would generally worsen under this setting. 
The goal was to assess whether QUANNs would experience a smaller performance drop compared to the baselines. 
This was confirmed by the results, which show that only QUANNs retain their performance across all tasks, whereas the baselines experience significant performance drop (cf. Figure~\ref{fig:MNIST_comparison}); despite similar number of trainable parameters (RQ2b).

\subsection{Omniglot-sets experiments}

\begin{wrapfigure}{r}{0.45\textwidth}  
  \vspace{-\intextsep}
  \centering  
  \includegraphics[width=0.45\textwidth]{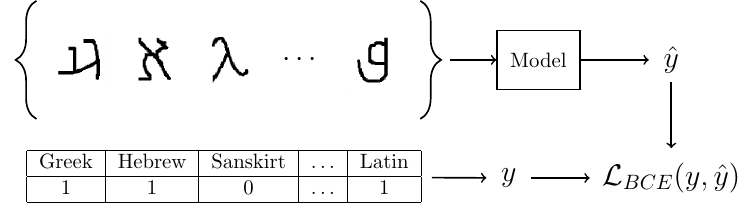} 
  \caption{Omniglot experiment. The task is to identify which alphabets are represented in a set of  images of hand-written characters.}
  \label{fig:omniglot_experiment}
  \vspace{-\intextsep}
\end{wrapfigure}

We modified the experiment proposed in~\citet{lee2019set} to a multi-label binary classification, where the goal was to identify which of the 40 Omniglot alphabets were represented within a given set of $n$ Omniglot handwritten character images (cf. Figure~\ref{fig:omniglot_experiment}). 
The experiments were conducted under four different conditions, fixing the minimum cardinality at  $n_{\min} = 5$ while varying the maximum cardinality of the image sets as $n_{\max} =10, 15, 20 \text{ and } 25$. 

The performance was assessed using the balanced accuracy score, computed as the macro-average across the alphabets. The obtained results (cf.\ Table~\ref{tab:performance}) show that QUANNs surpassed all baselines.

\begin{wraptable}{r}{0.45\textwidth}
    \vspace{-0.25in}
    \small
    \centering
    \caption{Omniglot image classification obtained from the transfer learning experiment.     
    }
    \label{tab:transfer_learning}  
    \resizebox{0.45\textwidth}{!}{
    \begin{tabular}{llcc}
\toprule
& & \multicolumn{2}{c}{ACC $\uparrow$} \\
 \cmidrule(r){3-4}
 &  & Mean & Std \\
 & Model &  &  \\
\midrule
\multirow[t]{5}{*}{Unary} & DeepSet & 0.524 & 0.042 \\
 & PointNet & 0.256 & 0.219 \\
 & NormDeepSet & 0.393 & 0.052 \\
 & HPDS & 0.435 & 0.023 \\
 & QUANN-1 & \textbf{0.597} & 0.014 \\
\multirow[t]{2}{*}{Binary} & SetTransformer & 0.480 & 0.019 \\
 & QUANN-2 & \underline{0.589} & 0.041 \\
\multirow[t]{3}{*}{Non-Janossy} & FSPool & 0.463 & 0.051 \\
 & SlotAtt & 0.484 & 0.018 \\
 & LAF & 0.156 & 0.007 \\
\bottomrule
\end{tabular}
    }
    \vspace{-\intextsep}
\end{wraptable}

\paragraph{Transfer learning}

We extended the Omniglot experiments with a transfer learning analysis. We extracted the encoders $\phi$ from all models trained in the Omniglot experiment, froze their weights, and transferred them to a image classification model composed of the encoder followed by a linear layer. This model was trained to predict the alphabet to which each image belongs.

The results show that encoders learned by QUANNs provide better support for the image classification compared to those learned by the baselines (cf. Table~\ref{tab:transfer_learning}). This suggests that the factorization employed in QUANNs learns more transferrable embeddings that can be used beyond the set-related tasks (RQ2a).

\subsection{ModelNet40 experiments}

We performed multi-class classification experiments using the ModelNet40 dataset, following a setup similar to that of~\citet{qi2017pointnet} and~\citet{qi2017pointnet++}. The task was to predict the category of the object represented by a point cloud. The experiments were conducted under three different settings: the point clouds were subsampled to $n$ points, where the maximum number of points was fixed at $n_{\max} = 2048$, while the minimum number of points varied as $
n_{\min} = 256, 512 \text{ and } 1024$.

QUANNs were outperformed by FSPool and also by PointNet, but not other baselines (cf.\ Table~\ref{tab:performance}). 
The high performance of FSPool and PointNet is not surprising, as both models employ architectures that are well-suited for detecting extreme values (max or min), a property known to be critical for capturing local and global geometric features of point clouds~\citep{zhang2019explaining}. 
Overall, these results corroborate the findings reported in the original experiments~\citep{qi2017pointnet}.

\subsection{QM9 experiments}

Finally, we evaluated our models on two regression tasks derived from the QM9 molecular dataset~\citep{ramakrishnan2014quantum}. Each molecule is represented as a set of atoms with associated 3D coordinates, ranging from 3 to 29 atoms per molecule. In the first task, the goal was to predict the HOMO energy level, while in the second task we predicted the LUMO energy level. As shown in Table~\ref{tab:performance}, QUANN-2 achieves the best performance across all baselines on both tasks.

\section{Discussion}
\label{sec:discussion}

\begin{wrapfigure}{r}{0.45\textwidth}
    \vspace{-\intextsep}
    \centering
    \includegraphics[width=0.99\linewidth]{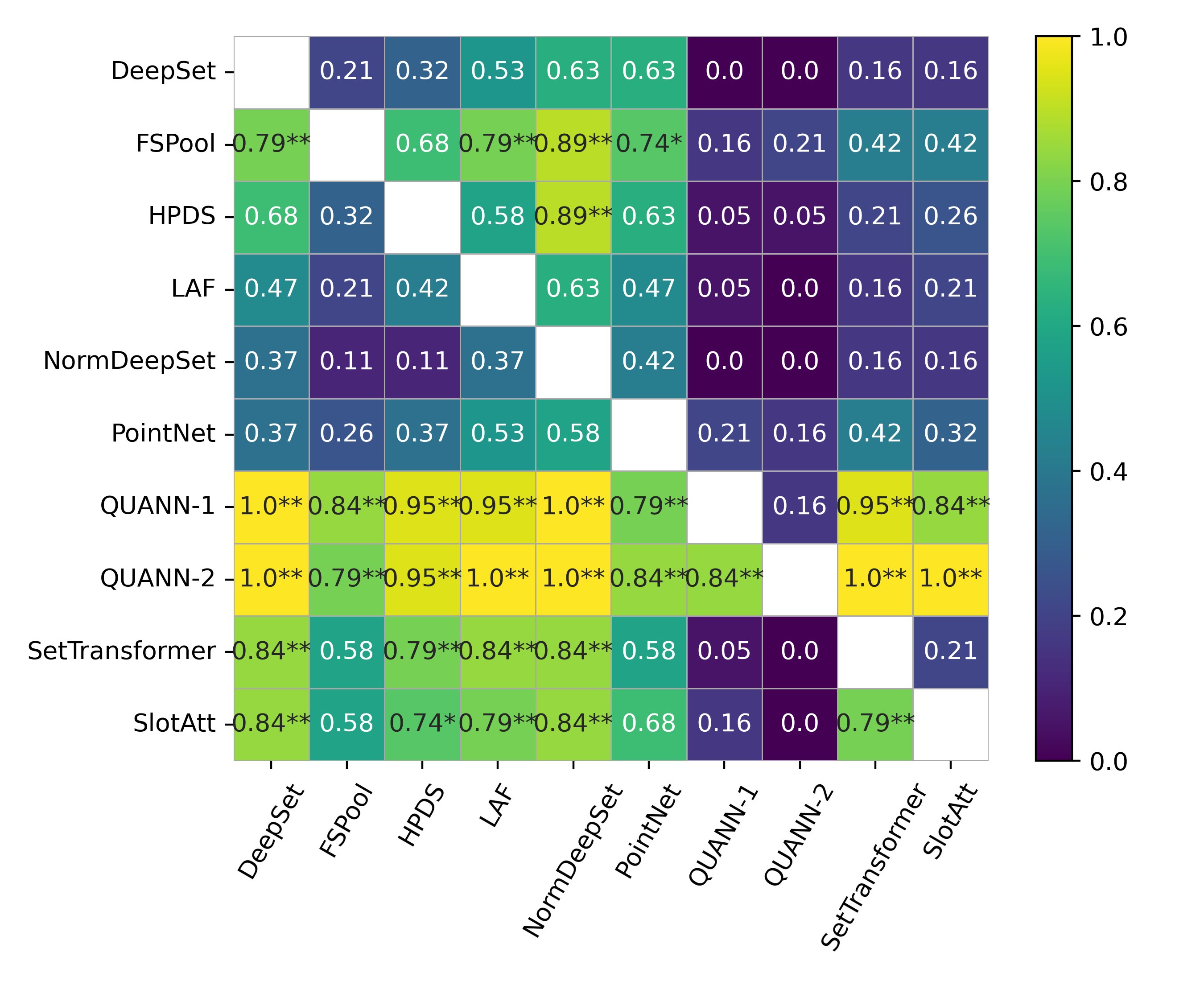}
    \caption{Relative number of outcomes (cf. Table~\ref{tab:performance}), where the model in the given row outperformed the model in the given column. An asterisk and a double asterisk indicate statistical significance ($p < 0.05$) and very high significance ($p < 0.01$), respectively.}
    \label{fig:win_loss}  
    \vspace{-\intextsep}
\end{wrapfigure}

\paragraph{Conclusions}

In this work, we introduced QUANNs, a novel approach for set function approximation that leverages a neuralized Kolmogorov mean as a trainable pooling operation. 
We presented theoretical benefits of this construction and hypothesized that these advantages should promote the learning of more transferable latent representations and improved generalization.
The obtained empirical results provide evidence in support of these hypotheses and show that QUANNs significantly outperform their state-of-the-art alternatives (cf. Figure~\ref{fig:win_loss}) (RQ3).

\paragraph{Limitations}

As the major limitation of QUANNs we consider their limited capacity in approximating sum-decomposable set functions (cf. Section~\ref{sec:approximations}).
Therefore, if the target set function is expected to be additive or expansive, a simple sum may provide better alternative to NKM.
Similarly, when data are limited, NKM may be unnecessarily flexible and prone to overfitting, whereas a fixed function may provide better inductive bias and computational simplicity.

Similar to most previous works, our theoretical and experimental results are restricted to countable sets.
Intuitively, extending QUANNs to uncountable sets may be possible by replacing the sum in Eq.~\ref{eq:quann} with an appropriate form of computational integration. However, we cannot guarantee which of the theoretical benefits of our method would carry over in this setting, and the approximation of set functions over uncountable domains remains an open question.

\paragraph{Applications}

QUANNs provide a general and expressive framework for learning central tendencies over arbitrary collections of representations, making them potentially useful across a variety of domains. For instance, QUANNs could be employed within graph neural networks, replacing conventional message-passing aggregators with a learnable generalized mean, which is fundamentally different from the approaches of~\citet{corso2020principal} and~\citet{ong2022learnable}, who use learnable fusion of simple aggregates and learnable pairwise operations, respectively.
Similarly, in multi-modal and multi-view learning~\citep{guo2019deep}, where information from several modalities or sources must be fused, QUANNs offer a principled way to learn the fusion rule for combining representations from each view. 
QUANNs could also be applied in federated learning~\citep{guendouzi2023systematic}, where model updates from multiple clients must be aggregated. 

\begin{table}[ht]
    \centering
    \small
    \caption{Model performance across all experiments, grouped by model class (Unary in the upper table; Binary and Non-Janossy in the lower). All values report mean performance across experimental replicates with corresponding standard deviations. Bold and underlined values denote the best and second-best performance across all models, respectively.}
    \label{tab:performance}

    \vspace{1em}
    (a) \textit{Unary models}
    \vspace{1em}
    
    \resizebox{0.85\textwidth}{!}{    \begin{tabular}{lrrrrrrrrrrrrrrrrrrrr}
\toprule
 & \multicolumn{2}{c}{DeepSet} & \multicolumn{2}{c}{PointNet} & \multicolumn{2}{c}{NormDeepSet} & \multicolumn{2}{c}{HPDS} & \multicolumn{2}{c}{\textbf{QUANN-1}} \\
 & Mean & Std & Mean & Std & Mean & Std & Mean & Std & Mean & Std \\
\cmidrule(r){2-3} \cmidrule(r){4-5} \cmidrule(r){6-7} \cmidrule(r){8-9} \cmidrule(r){10-11}
\midrule
Agg. & \multicolumn{9}{c}{\textit{MNIST, MSE $\downarrow$}} \\
\midrule
max & 0.120 & 0.012 & 0.079 & 0.002 & 0.128 & 0.014 & 0.071 & 0.009 & \textbf{0.067} & 0.022 \\
mean & 0.095 & 0.013 & 0.115 & 0.011 & 0.089 & 0.005 & 0.047 & 0.005 & \underline{0.023} & 0.004 \\
mode & 3.903 & 0.150 & 5.498 & 0.270 & 4.119 & 0.170 & 4.011 & 0.295 & \underline{2.219} & 0.163 \\
sum & 5.924 & 0.694 & 18.740 & 0.921 & 7.706 & 1.575 & 5.844 & 0.738 & \underline{4.066} & 0.673 \\
variance & 1.747 & 0.226 & 2.143 & 0.121 & 1.927 & 0.132 & 1.265 & 0.414 & \underline{0.605} & 0.171 \\
\midrule
Agg. & \multicolumn{9}{c}{\textit{MNIST - Pre-trained encoders, MSE $\downarrow$}} \\
\midrule
max & 0.421 & 0.041 & 0.199 & 0.018 & 0.408 & 0.025 & 0.314 & 0.068 & \underline{0.045} & 0.014 \\
mean & 0.207 & 0.008 & 0.317 & 0.023 & 0.201 & 0.005 & 0.153 & 0.006 & \underline{0.021} & 0.003 \\
mode & 3.880 & 0.143 & 6.384 & 0.170 & 3.626 & 0.095 & 3.300 & 0.085 & \underline{1.043} & 0.337 \\
sum & 15.241 & 1.129 & 83.217 & 2.177 & 17.281 & 1.756 & 18.115 & 2.764 & \underline{3.074} & 0.313 \\
variance & 4.245 & 0.086 & 3.766 & 0.182 & 3.950 & 0.173 & 3.774 & 0.296 & \underline{0.625} & 0.212 \\
\midrule
$n_{\text{max}}$ & \multicolumn{9}{c}{\textit{Omniglot, Balanced ACC $\uparrow$}} \\
\midrule
10 & 0.579 & 0.009 & 0.566 & 0.076 & 0.544 & 0.011 & 0.610 & 0.017 & \textbf{0.734} & 0.007 \\
15 & 0.584 & 0.007 & 0.523 & 0.036 & 0.519 & 0.001 & 0.535 & 0.015 & \underline{0.711} & 0.017 \\
20 & 0.588 & 0.007 & 0.515 & 0.012 & 0.527 & 0.002 & 0.543 & 0.011 & \underline{0.674} & 0.033 \\
25 & 0.592 & 0.004 & 0.545 & 0.077 & 0.536 & 0.006 & 0.537 & 0.009 & \textbf{0.655} & 0.001 \\
\midrule
$n_{\text{min}}$ & \multicolumn{9}{c}{\textit{ModelNet40, ACC $\uparrow$}} \\
\midrule
256 & 0.648 & 0.007 & \underline{0.788} & 0.005 & 0.649 & 0.006 & 0.658 & 0.009 & 0.676 & 0.003 \\
512 & 0.649 & 0.005 & \underline{0.788} & 0.002 & 0.520 & 0.011 & 0.667 & 0.007 & 0.688 & 0.013 \\
1024 & 0.645 & 0.012 & \underline{0.791} & 0.006 & 0.534 & 0.007 & 0.663 & 0.013 & 0.686 & 0.013 \\
\midrule
Task & \multicolumn{9}{c}{\textit{QM9, MSE $\downarrow$}} \\
\midrule
homo & 179.423 & 10.715 & \underline{153.419} & 7.299 & 166.695 & 6.495 & 212.237 & 7.899 & 160.345 & 12.676 \\
lumo & 399.119 & 13.734 & 401.247 & 3.917 & 458.711 & 10.189 & 674.320 & 20.098 & \underline{374.646} & 9.096 \\
\bottomrule
\end{tabular}
    }

    \vspace{1em}
    (b) \textit{Binary \& Non-Janossy models}
    \vspace{1em}
    
    \resizebox{0.85\textwidth}{!}{
    \begin{tabular}{lrrrrrrrrrrrrrrrrrrrr}
\toprule
 & \multicolumn{4}{c}{Binary} & \multicolumn{6}{c}{Non-Janossy} \\
\cmidrule(r){2-5} \cmidrule(r){6-11}
\cmidrule(r){2-3} \cmidrule(r){4-5} \cmidrule(r){6-7} \cmidrule(r){8-9}
 & \multicolumn{2}{c}{SetTransformer} & \multicolumn{2}{c}{\textbf{QUANN-2}} & \multicolumn{2}{c}{FSPool} & \multicolumn{2}{c}{SlotAtt} & \multicolumn{2}{c}{LAF} \\
 & Mean & Std & Mean & Std & Mean & Std & Mean & Std & Mean & Std \\
\midrule
Agg. & \multicolumn{9}{c}{\textit{MNIST, MSE $\downarrow$}} \\
\midrule
max & 0.099 & 0.009 & \underline{0.069} & 0.014 & 0.095 & 0.018 & 0.085 & 0.025 & 0.571 & 0.092 \\
mean & 0.031 & 0.002 & \textbf{0.023} & 0.003 & 0.075 & 0.007 & 0.029 & 0.005 & 0.050 & 0.012 \\
mode & 3.085 & 0.367 & \textbf{1.579} & 0.408 & 2.847 & 0.479 & 3.255 & 0.124 & 4.219 & 0.370 \\
sum & 4.090 & 0.601 & \textbf{2.861} & 0.251 & 5.202 & 0.441 & 66.324 & 5.957 & 20.920 & 3.681 \\
variance & 2.284 & 0.298 & \textbf{0.605} & 0.043 & 1.317 & 0.201 & 1.956 & 0.120 & 1.383 & 0.197 \\
\midrule
Agg. & \multicolumn{9}{c}{\textit{MNIST - Pre-trained encoders, MSE $\downarrow$}} \\
\midrule
max & 0.079 & 0.016 & \textbf{0.042} & 0.008 & 0.230 & 0.038 & 0.077 & 0.013 & 0.653 & 0.045 \\
mean & 0.046 & 0.006 & \textbf{0.020} & 0.002 & 0.142 & 0.019 & 0.049 & 0.002 & 0.101 & 0.022 \\
mode & 2.837 & 0.185 & \textbf{0.756} & 0.169 & 3.475 & 0.381 & 2.556 & 0.178 & 2.851 & 0.911 \\
sum & 4.725 & 0.456 & \textbf{2.535} & 0.377 & 9.520 & 0.509 & 61.432 & 5.679 & 14.319 & 5.439 \\
variance & 2.322 & 0.351 & \textbf{0.616} & 0.134 & 2.583 & 0.241 & 1.825 & 0.026 & 1.775 & 0.321 \\
\midrule
$n_{\text{max}}$ & \multicolumn{9}{c}{\textit{Omniglot, Balanced ACC $\uparrow$}} \\
\midrule
10 & 0.525 & 0.050 & \underline{0.723} & 0.011 & 0.557 & 0.074 & 0.613 & 0.006 & 0.508 & 0.008 \\
15 & 0.603 & 0.003 & \textbf{0.711} & 0.015 & 0.579 & 0.079 & 0.618 & 0.005 & 0.528 & 0.003 \\
20 & 0.606 & 0.004 & \textbf{0.693} & 0.017 & 0.575 & 0.060 & 0.618 & 0.009 & 0.535 & 0.006 \\
25 & 0.606 & 0.006 & 0.635 & 0.069 & \underline{0.640} & 0.012 & 0.616 & 0.005 & 0.545 & 0.010 \\
\midrule
$n_{\text{min}}$ & \multicolumn{9}{c}{\textit{ModelNet40, ACC $\uparrow$}} \\
\midrule
256 & 0.676 & 0.006 & 0.773 & 0.011 & \textbf{0.796} & 0.011 & 0.726 & 0.003 & 0.660 & 0.020 \\
512 & 0.679 & 0.009 & 0.762 & 0.011 & \textbf{0.791} & 0.013 & 0.739 & 0.014 & 0.671 & 0.028 \\
1024 & 0.678 & 0.009 & 0.782 & 0.005 & \textbf{0.794} & 0.012 & 0.727 & 0.012 & 0.688 & 0.008 \\
\midrule
Task & \multicolumn{9}{c}{\textit{QM9, MSE $\downarrow$}} \\
\midrule
homo & 165.998 & 2.392 & \textbf{104.821} & 10.472 & 176.240 & 7.769 & 164.640 & 19.453 & 297.712 & 84.158 \\
lumo & 461.948 & 11.837 & \textbf{258.039} & 20.471 & 480.726 & 32.999 & 386.630 & 11.782 & 657.083 & 77.112 \\
\bottomrule
\end{tabular}
    }
\end{table}

\clearpage
\bibliography{iclr2026_conference}
\bibliographystyle{iclr2026_conference}
\clearpage

\appendix

\section{Derivative of the Kolmogorov Mean}
\label{sec:derivative_of_km}

We analyze the derivative of the Kolmogorov mean with respect to a single input element $x_j$.  
Given an invertible and differentiable transformation $\psi$, the Kolmogorov mean of a set $\mathbf{X} = \{\mathbf{x}_1, \dots, \mathbf{x}_n\}$ is defined as:
\begin{equation}
    M_f(\mathbf{X}) = f^{-1}\!\!\left( \frac{1}{n} \sum_{i=1}^n f(\mathbf{x}_i) \right).
\end{equation}

Let $\mathbf{z}_i = f(\mathbf{x}_i)$ and define the aggregated representation $\bar{\mathbf{z}} = \frac{1}{n} \sum_{i=1}^n \mathbf{z}_i$. Then:
\[
\mathbf{y} = M_f(\mathbf{X}) = f^{-1}(\bar{\mathbf{z}}).
\]

Our goal is to compute the \textit{sensitivity} of $\mathbf{y}$ with respect to a single input $\mathbf{x}_j$.  
Applying the multivariate chain rule:
\begin{equation}
\frac{\partial M_f(\mathbf{X})}{\partial \mathbf{x}_j} = 
\frac{\partial f^{-1}(\bar{\mathbf{z}})}{\partial \bar{\mathbf{z}}} \cdot 
\frac{\partial \bar{\mathbf{z}}}{\partial \mathbf{x}_j}.
\end{equation}

The second term is straightforward:
\begin{equation}
\frac{\partial \bar{\mathbf{z}}}{\partial \mathbf{x}_j} 
= \frac{1}{n} \frac{\partial \mathbf{z}_j}{\partial \mathbf{x}_j} 
= \frac{1}{n} J_f(\mathbf{x}_j),
\end{equation}
where $J_f(\mathbf{x}_j) = \frac{\partial f(\mathbf{x}_j)}{\partial \mathbf{x}_j}$ is the Jacobian of $f$ at $\mathbf{x}_j$.

For the first term, we use the \textit{inverse function theorem}: the Jacobian of the inverse map is the inverse of the Jacobian of the forward map:
\begin{equation}
\frac{\partial f^{-1}(\bar{\mathbf{z}})}{\partial \bar{\mathbf{z}}} 
= J_{f^{-1}}(\bar{\mathbf{z}}) 
= \left[J_f(\mathbf{y})\right]^{-1},
\quad \text{where } \mathbf{y} = M_f(\mathbf{X}).
\end{equation}

Combining these results, we obtain:
\begin{equation}
\displaystyle 
\frac{\partial M_f(\mathbf{X})}{\partial \mathbf{x}_j} 
= 
\frac{1}{n}
\left[J_f(\mathbf{y})\right]^{-1} 
\cdot 
J_f(\mathbf{x}_j)
\end{equation}

\paragraph{Key properties}

Since $f$ is invertible by the definition of Kolmogorov mean, its Jacobian $J_f$ is non-singular. 
Therefore, the resulting derivative $\partial M_f(\mathbf{X})/\partial \mathbf{x}_j$ is also guaranteed to be non-singular. 
Moreover, if $f$ is also monotonic, the resulting derivative is positive (in the scalar case) or has a positive-definite symmetric part (in the multivariate case). The resulting derivative of the Kolmogorov mean thus inherits at least two important structural properties:  

\begin{itemize}

    \item \textbf{Dimension preservation} (from non-singularity) -- NKM maps any infinitesimal input perturbation $\delta \mathbf{x}_j$ to an output perturbation $\delta \mathbf{y}$ so that $\delta \mathbf{y}$ is non-zero along all non-zero dimensions of $\delta \mathbf{x}_j$. The NKM thus acts as a smooth aggregator that maps small input neighborhoods to small latent neighborhoods so that no dimension is flattened to zero.

    \item \textbf{Local alignment} (from positive-definiteness) -- NKM maps any infinitesimal input perturbation $\delta \mathbf{x}_j$ to an output perturbation $\delta \mathbf{y}$ that satisfies $\delta \mathbf{y}^\top \delta \mathbf{x}_j > 0$ ensuring that small input perturbations produce output changes locally aligned with the input shift.

\end{itemize}

\section{Choice of $\psi$-network architecture}
\label{sec:psi_choice}

In all our experiments, we used the RevNet~\citep{gomez2017reversible} architecture to implement the generating function $\psi$ of the NKM. 
RevNet is composed of a sequence of invertible blocks, where each block operates on a partitioned input vector $(x_1, x_2)$ and is defined as:
\begin{equation}
    \begin{aligned}
        x_1' &= x_1 + f(x_2), \\
        x_2' &= x_2 + g(x_1'),
    \end{aligned}
\end{equation}
where \(f\) and \(g\) are arbitrary neural networks (we used ReLU-activated MLPs), and the transformation is easily invertible by sequentially subtracting these updates in reverse. We selected RevNet for its simplicity and efficiency, making it an intuitive choice for learning $\psi$.

To further investigate the relationship between the performance of QUANN and the architectural design of the neural network implementing the generating function 
$\psi$, we repeated the synthetic data experiments (cf. Section~\ref{sec:synthetic_data_exp}) using only QUANN models, under different architectures implementing the generating function $\psi$.

\paragraph{Use of RealNVP coupling} 

The RevNet blocks used to implement the generating function $\psi$ were replaced with conceptually similar but more expressive RealNVP coupling blocks~\citep{dinh2014nice,dinh2016density}. 
This modification preserved the invertibility constraint while increasing the representational capacity of $\psi$. 
The results (cf. Figure~\ref{fig:rev_net_real_nvp_performance}) did not reveal clear consistent pattern indicating improvement or worsening of the model performance, indicating that while architectural enhancements may yield marginal gains in some tasks, the overall effectiveness of QUANNs does not critically depend on the specific choice of invertible block, provided it is sufficiently expressive and scalable.

\paragraph{Varying RevNet capacity} 

We systematically varied the number of RevNet blocks and the number of hidden layers per block. The resulting performance metrics, summarized in Figure~\ref{fig:psi_design_performance}, indicate that the impact of these architectural choices is highly task-dependent. That is, no single configuration consistently outperforms others across all aggregation functions. These findings suggest that, in practice, it is not possible to determine the optimal design of $\psi$ \textit{a priori}. Instead, users are advised to empirically explore different configurations in order to identify the most effective architecture for a given task.

\begin{figure}[ht!]
    \centering
    \includegraphics[width=0.99\linewidth]{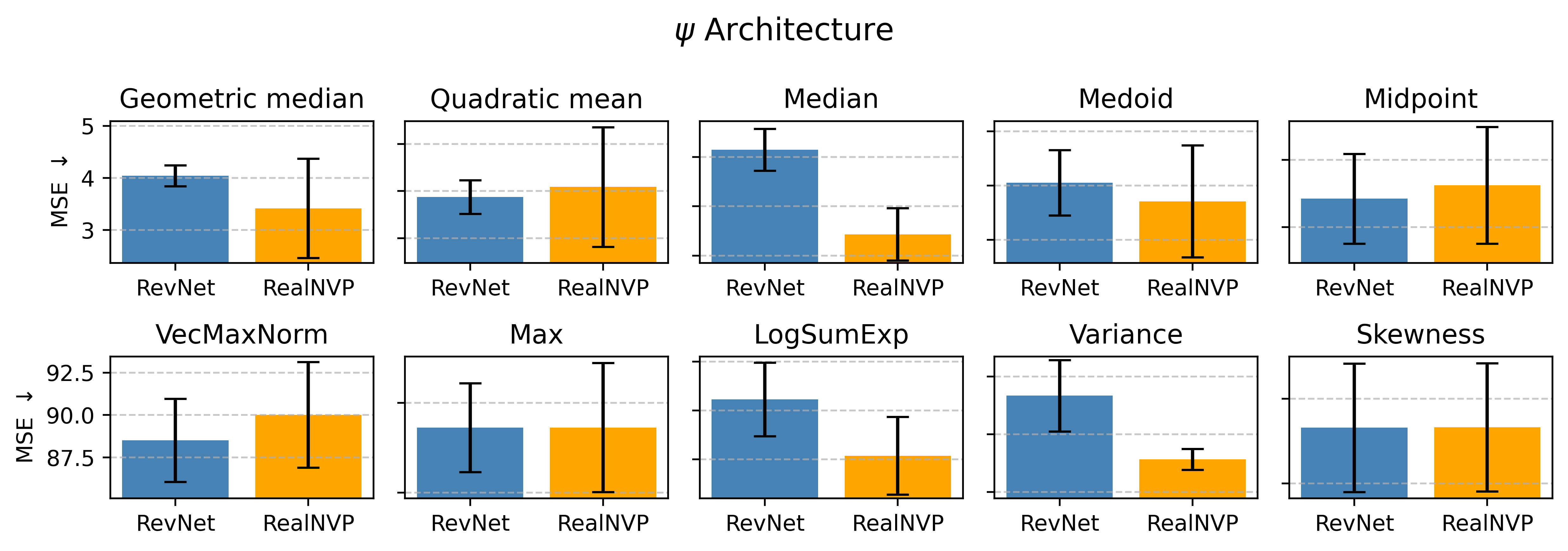}
    \caption{QUANN performance in synthetic data experiment in dependence on the choice of the $\psi$-network architecture (RevNet vs RealNVP coupling). The results show that there is no clear consistent pattern indicating improvement or worsening of the model performance due to altered network architecture.}
    \label{fig:rev_net_real_nvp_performance}
\end{figure}

\begin{figure}[ht!]
  \centering
  \includegraphics[width=0.99\linewidth]{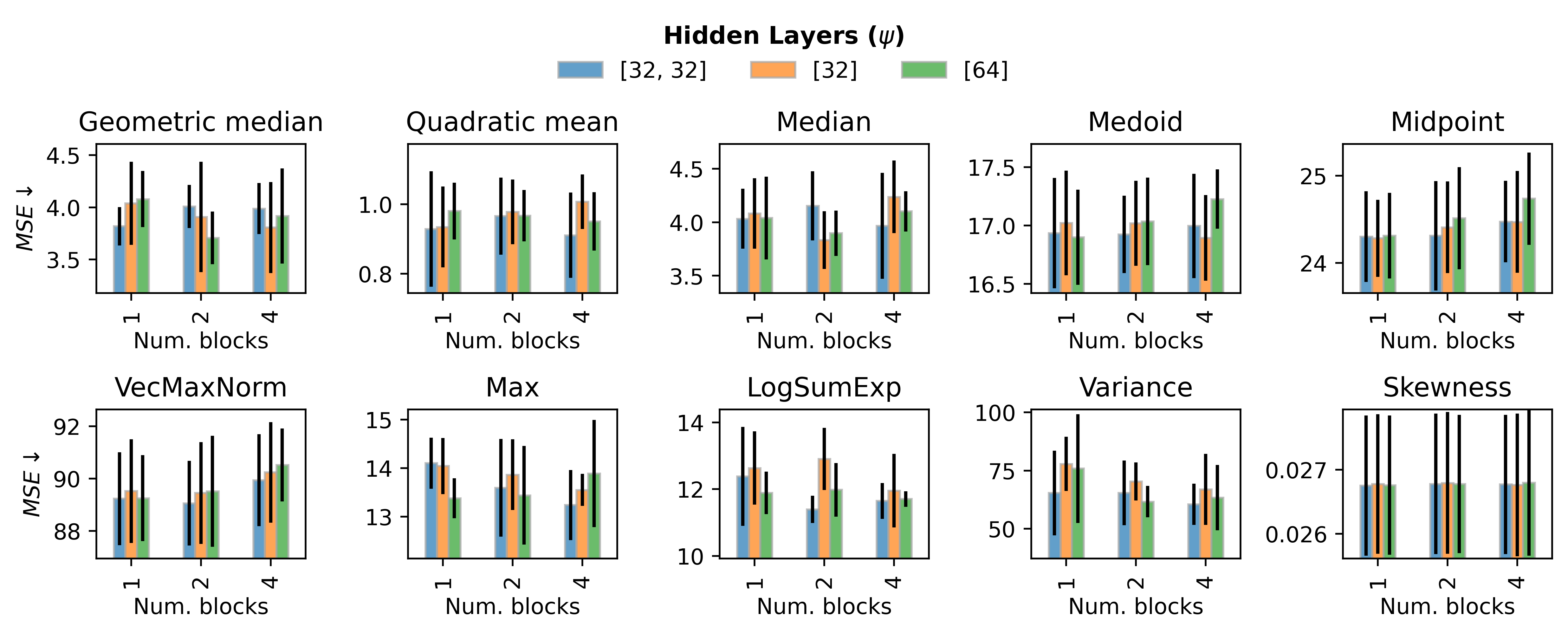}
  \caption{QUANN performance in synthetic data experiment in dependence on the choice of the $\psi$-network architecture (number of RevNet blocks and hidden layers per block). The results show that there is not universal relationship between the choice of the architecture and the associated performance.}
  \label{fig:psi_design_performance}
\end{figure}

\section{Scalability of Neuralized Kolmogorov Means}

The NKM computation scales linearly with the number of elements in the input set: $\mathcal{O}(n)$, involving $n$ forward passes through the generating function $\psi$, followed by a single pass through its inverse $\psi^{-1}$.
The scaling behavior with respect to the input dimensionality, $d$, depends on the choice of neural architecture used to implement $\psi$. 
When $\psi$ is realized as a RevNet, the memory and computational costs scale linearly with $d$: $\mathcal{O}(d)$. 
Overall, these properties make NKM a practical and scalable choice for processing even large and high-dimensional input sets.

\section{Possible extensions \& modifications of QUANNs}

In our experiments, we purposefully refrained from using regularization techniques or adopting more complex architectures, to maintain a clear epistemological focus on the core properties of the proposed models. 
Nevertheless, we identify at least three promising avenues for extension and modification of our work: (i) the adoption of regularization techniques, particularly set-specific normalization layers such as SetNorm~\citep{zhang2022set}; (ii) hierarchical extensions of QUANNs, analogous to how PointNet++~\citep{qi2017pointnet++} extends the original PointNet~\citep{qi2017pointnet} and (iii) the introduction of \textit{``multi-head'' learnable pooling}, where multiple Neuralized Kolmogorov Means (with independently trained generating functions) are applied in parallel and then concatenated, may substantially boost expressivity of the models.

\subsection{Permutation-equivariant QUANNs}

Our proposed framework focuses on learning permutation-invariant functions. 
A common strategy to extend permutation-invariant models to permutation-equivariant ones is to augment each element with global, sum-pooled features and then apply an element-wise function to combine local and global information (\textit{e.g.},~\citep{segol2020universal}). 
Analogously, permutation-equivariant QUANNs layer can be obtained by augmenting elements after applying the inverse transformation:
\begin{equation}\label{eq:equivariant_quann}
f_i(\mathbf{X}) = \rho \!\left(\mathbf{x}_i, \psi^{-1} \frac{1}{|P_k(\mathbf{X})|} \sum_{\pi \in P_k(\mathbf{X})} \psi \!\bigl(\phi(\mathbf{x}_i)\bigr) \right)
\end{equation}

\begin{wraptable}{r}{0.45\textwidth}

    \vspace{-0.25in}
    \small
    \centering
    \caption{ModelNet40 classification obtained by the permutation-equivariant models. 
    }
    \label{tab:equivariqnce_performance}  
    \resizebox{0.45\textwidth}{!}{
    \begin{tabular}{@{}lcccc@{}}
\toprule
& \multicolumn{4}{c}{ACC$\uparrow$} \\
\cmidrule(lr){2-5}
& \multicolumn{2}{c}{$n=100$} 
& \multicolumn{2}{c}{$n=1000$} \\
\cmidrule(lr){2-3} \cmidrule(lr){4-5}
Model & Mean & Std & Mean & Std \\
\midrule
DeepSet   &  \underline{0.798} & 0.021 & \underline{0.844} & 0.017 \\
\textbf{QUANN-1}   & 0.786 & 0.015 & 0.835 & 0.022 \\
\textbf{QUANN-2}   & \textbf{0.846} & 0.012 &  \textbf{0.882} & 0.018 \\
\bottomrule
\end{tabular}
    }
    \vspace{-\intextsep}
\end{wraptable}

To evaluate the performance of permutation-equivariant layers, we conducted the following experiment. We implemented the permutation-equivariant variant of DeepSets as described in \citet{zaheer2017deep} (Section 3.1, Equivariant Model, and Appendix C). We then assessed this model under different pooling operations using the ModelNet40 dataset, strictly adhering to the preprocessing protocol outlined in \citet{zaheer2017deep} (Section H), including zero-mean/unit-variance normalization and fixed set cardinality ($n = 100$ and $n = 1000$). The best performance we obtained closely matched the results originally reported in \citet{zaheer2017deep}.

We then implemented an analogous model in which the equivariant DeepSets layers were replaced with QUANN-based equivariant layers as defined in Equation~\ref{eq:equivariant_quann}. While the equivariant QUANN-1 achieved performance comparable to that of equivariant DeepSets, the equivariant QUANN-2 model substantially outperformed both, as shown in Table~\ref{tab:equivariqnce_performance}.

\section{Experiments details}
\label{sec:experiments_supp}

\subsection{Datasets}
\label{sec:datasets_supp}

\paragraph{Synthetic dataset}

Each point cloud $X = (x_i)^{n}_{i = 1}$ consisted of $n \in \mathbb{N}$ vectors randomly sampled from $\mathbf{x} \in (0, 1)^{16}$. 
Number of vectors $n$ was sampled uniformly from interval $[2, 1024]$. The obtained vectors were subsequently transformed using affine transformations of the form
$ax + b$, where $a \in (0, 1)$ is a scalar and $b \in (-10, 10)^{16}$ is a translation vector sampled independently for each cloud. To assign labels, we applied a selected aggregation function 
$F(\mathbf{X}): \mathbb{R}^{d} \rightarrow \mathbb{R}^{d}$, computed across the set of transformed points.
We experimented with a total of ten distinct aggregation functions, encompassing a range of statistical descriptors (cf. Table~\ref{tab:aggregators}). The resulting dataset was partitioned into training, validation, and test subsets consisting of $20\times10^3$, $2\times10^3$, and $3\times10^3$ clouds respectively.

\paragraph{MNIST-sets dataset}

MNIST is a widely used benchmark dataset consisting of grayscale $28 \times 28$ pixels images of handwritten digits (0--9)~\citep{lecun1998gradient}.
We constructed a set dataset derived from the MNIST images and their corresponding digit labels. 
Each instance in our dataset was created by a set of randomly selected MNIST images of varying sizes, where the set label was obtained by applying an aggregation function to the set of associated image labels. 
The number of elements in each set, $n$, was sampled uniformly between $n_{min}=2$ and $n_{max}=16$. 
We employed ten different aggregation functions, including common measures of central tendency (e.g., mean, median), extremal values (max), summation, variance, and others. 
The task was to estimate the set labels. 
For each aggregating function we generated a new datasets, including training, validation and testing subsets; consisting of of $20\times10^3$, $2\times10^3$, and $3\times10^3$ instances respectively.

\paragraph{Omniglot-sets dataset}

Omniglot is a collection of handwritten characters drawn from approximately 50 different alphabets, containing over 1,600 unique character classes, originally proposed for evaluation of a few-shot learning~\citep{lake2015human}.
Following a similar procedure to the MNIST-set dataset, we constructed variable-sized sets from the Omniglot dataset by randomly sampling images from the Omniglot collection. 
Each set was assigned a binary label vector indicating which of the 50 alphabets were represented in the set, based on the presence or absence of their characters. This framing naturally leads to a binary multi-label classification task, where the objective is to predict which alphabets appear in a given set.
We sampled between $n_{min}$ and $n_{max}$ images, where the minimum set size was fixed at $n_{min} = 5$ was maximum was varying across $n_{max} \in \{10, 15, 20, 25\}$ to explore the effect of increasing the upper bound on the set size on the models' performance.
For each value of $n_{max}$, we generated a new dataset comprising training, validation, and test subsets, consisting of of $20\times10^3$, $2\times10^3$, and $3\times10^3$ instances, respectively.

\paragraph{ModelNet40 dataset}

ModelNet40 is a 3D object recognition dataset containing CAD models from 40 object categories, commonly used in point cloud classification tasks~\citep{wu20153d}.
Each object is represented as a point cloud consisting of a varying number of 3D coordinates. 
We uniformly subsampled each object to obtain a fixed number of points $n \in \{256, 512, 1024\}$.
The resulting point clouds were then normalized to fit within a unit sphere centered at the origin, ensuring geometric consistency across all instances.
The classification task involved predicting the correct object category from the point cloud representation. 
For each value of $n$, we generated a new dataset comprising training, validation, and test subsets, consisting of $20\times10^3$, $2\times10^3$, and $3\times10^3$ objects, respectively.

\paragraph{QM9 dataset}

QM9 consists of computed geometric, energetic, electronic, and thermodynamic properties for 134{,}000 stable small organic molecules made up of C, H, O, N, and F~\citep{ramakrishnan2014quantum}. 
These properties were obtained through computational simulation and, although they are close to experimentally measured values, the dataset cannot be strictly considered real-world.
Each molecule consists of a variable number of atoms with associated 3D coordinates.
The number of atoms ranges from 3 to 29, so the dataset to be viewed naturally as a set-valued input of varying cardinality.
We conducted two independent prediction tasks: estimating the HOMO and LUMO energy levels, whose values we converted to milli-electron volts (meV) for improved numerical stability. 
For each task, we trained and evaluated the models using independent training, validation, and test splits, comprising $80\%$, $10\%$, and $10\%$ of the data, respectively.

\subsection{Models implementation}

All models, including our proposed QUANNs, its ablated variants, and the baseline methods, were implemented in PyTorch. 
The baselines were implemented based on the methodological descriptions provided in their respective papers and, when available, their official code repositories.
Every effort was made to reproduce their original implementation faithfully and accurately.

To ensure a fair comparison across models, we standardized the encoder $\phi$ and estimator $\rho$ architectures used in all experiments (cf. Table~\ref{tab:architectures}). 
This consistency was maintained across all models, thereby isolating the impact of the aggregation mechanism as the primary variable under investigation.

Furthermore, to ensure comparable learning capacity between QUANN-1/-2 and the unary or binary aggregation-based baselines, we intentionally reduced the capacity of QUANNs' encoder and estimator networks. Specifically, one hidden layer was removed from each of these components in QUANNs. 
This adjustment led to all models having approximately the same number of learnable parameters, which mitigates any performance differences that could otherwise be attributed to model size (cf. Table~\ref{tab:num_params}).

\subsection{Training \& testing}

In all experiments, each model was trained using only the training portion of the dataset. During training, we explored a range of learning rates in order to identify optimal training configurations.
Following training, model selection was performed based on performance on the validation set. 
Specifically, for each model and learning rate configuration, we measured the validation loss and selected the trained instance that achieved the lowest validation error.
Once the best model for each method was selected, final evaluation was conducted on the test portion of the dataset. The resulting test loss or, where applicable, another performance metric was recorded. 
Information about hyper-paramters and loss functions selection are summarized in the Table~\ref{tab:hyper_parameters}.

\subsection{Statistical evaluation}
\label{sec:statistics}

\paragraph{Performance evaluation}

For each real-world dataset experiment, we performed 4 independent experimental replicates. For synthetic data experiments, we conducted 10 replicates. 
All reported values represent the mean and standard deviation computed across these experimental replicates.
To assess whether one model significantly outperformed another, we adopted a commonly used informal heuristic: a model was considered to perform significantly better if the difference in mean performance exceeded the sum of the corresponding standard deviations.

\paragraph{Win-loss matrices}

The models were additionally compared in pair-wise fashion and the results were conveyed as a win-loss matrix (Figure~\ref{fig:win_loss}), summarizing the total number of outcomes where the model in the given row outperformed the model in the given column (comparing mean values only, irrespectively of the standard deviation), normalized by the total number of outcomes the models participated in (cf. Table~\ref{tab:performance}).

\paragraph{Binomial test}

To assess the statistical significance of the win proportions in the win-loss matrix, a one-tailed binomial test was conducted for each entry. The test was designed to evaluate whether the observed proportion of wins for a given model is greater than what would be expected by chance.
The null hypothesis thus says that observed proportion of the wins $r$ is not greater than $0.5$ -- the expected proportion under random chance:
\begin{align*}
    H_0: r &\leq 0.5 \\
    H_1: r &> 0.5
\end{align*}
The entries with the $p\text{--}value < 0.05$ were considered significant (*), and those with $p\text{--}value < 0.01$ were considered highly significant (**).

\subsection{Computational resources}
\label{sec:compute}

The experiments were performed across 4 NVIDIA RTX A6000 GPUs, and 28 AMD EPYC-Rome Processors, on Ubuntu 20.04.6 LTS (Focal Fossa).
The project code is available at: \url{https://github.com/tomastokar/Quasi-Arithmetic-Neural-Networks}

\section{Supplementary Tables}

\begin{table}[h!]
    \centering
    \small
    \label{tab:mnist_results_supp}
    \caption{Performance of individual models in the additional MNIST-set aggregation experiments. All values represent the Mean Squared Error (MSE; lower is better), averaged across multiple experimental replicates, with standard deviations reported. Values highlighted by bold indicate best performance, while those highlighted by underline show the second best performance.
    }
    \resizebox{1.00\textwidth}{!}{
    \begin{tabular}{@{}lrrrrrrrrrrrrrrrrrr@{}}
\toprule
 & \multicolumn{10}{c}{\textbf{Unary}} & \multicolumn{4}{c}{\textbf{Binary}} & \multicolumn{4}{c}{\textbf{Non-Janossy}} \\
\cmidrule(r){2-11} \cmidrule(r){12-15} \cmidrule(r){16-19}
 & \multicolumn{2}{c}{\textbf{DeepSet}} & \multicolumn{2}{c}{\textbf{PointNet}} & \multicolumn{2}{c}{\textbf{NormDeepSet}} & \multicolumn{2}{c}{\textbf{HPDS}} & \multicolumn{2}{c}{\textbf{QUANN-1}} & \multicolumn{2}{c}{\textbf{SetTransformer}} & \multicolumn{2}{c}{\textbf{QUANN-2}} & \multicolumn{2}{c}{\textbf{FSPool}} & \multicolumn{2}{c}{\textbf{SlotAtt}} \\
\cmidrule(r){2-3} \cmidrule(r){4-5} \cmidrule(r){6-7} \cmidrule(r){8-9} \cmidrule(r){10-11} \cmidrule(r){12-13} \cmidrule(r){14-15} \cmidrule(r){16-17} \cmidrule(r){18-19}
 & Mean & Std & Mean & Std & Mean & Std & Mean & Std & Mean & Std & Mean & Std & Mean & Std & Mean & Std & Mean & Std \\
\midrule
Geometric mean & 0.212 & 0.014 & 0.117 & 0.016 & 0.188 & 0.023 & 0.117 & 0.012 & 0.106 & 0.024 & 0.110 & 0.014 & \textbf{0.085} & 0.024 & 0.129 & 0.022 & \underline{0.102} & 0.026 \\
LogMeanExp & 0.118 & 0.029 & 0.096 & 0.018 & 0.114 & 0.019 & 0.081 & 0.011 & \textbf{0.053} & 0.011 & 0.076 & 0.025 & \underline{0.061} & 0.025 & 0.105 & 0.020 & 0.070 & 0.009 \\
harmonic & 0.079 & 0.016 & 0.082 & 0.004 & 0.070 & 0.007 & 0.063 & 0.005 & \underline{0.030} & 0.004 & 0.037 & 0.010 & \textbf{0.024} & 0.005 & 0.060 & 0.016 & 0.033 & 0.008 \\
median & 0.665 & 0.042 & 0.849 & 0.053 & 0.680 & 0.041 & 0.376 & 0.028 & \underline{0.218} & 0.051 & 0.382 & 0.046 & \textbf{0.137} & 0.021 & 0.417 & 0.085 & 0.259 & 0.021 \\
midrange & 0.059 & 0.015 & \underline{0.026} & 0.002 & 0.062 & 0.004 & 0.030 & 0.004 & 0.026 & 0.008 & 0.057 & 0.009 & \textbf{0.024} & 0.007 & 0.044 & 0.017 & 0.035 & 0.003 \\
\bottomrule
\end{tabular}
    }
\end{table}

\begin{table}[h!]
    \centering
    \small
    \caption{Summary of the point cloud aggregation functions used in the synthetic data experiments (cf. Section~\ref{sec:synthetic_data_exp}). Each function maps sets of vectors in \( \mathbb{R}^d \) to a single representative vector (\( \mathbb{R}^d \rightarrow \mathbb{R}^d \)). Each function captures different statistical or geometric properties of the input point set.}
    \begin{tabular}{l|ll}
\toprule
\textbf{Type} & \textbf{Function Name} & \textbf{Expression} \\
\midrule
\textit{Central tendencies} & Marginal median & \( \displaystyle \left( \mathrm{median}(x_{\cdot, j}) \right)_{j=1}^d \) \\
& Geometric median & \( \displaystyle \arg\min_{z \in \mathbb{R}^d} \sum_{i=1}^{n} \|\mathbf{x}_i - \mathbf{z}\|_1 \) \\
& Medoid & \( \displaystyle \arg\min_{x_i \in \mathcal{X}} \sum_{j=1}^{n} \|\mathbf{x}_i - \mathbf{x}_j\|_2 \) \\
& Quadratic mean & \( \displaystyle \left( \frac{1}{n} \sum_{i=1}^n \mathbf{x}_{i}^2 \right)^{1/2} \) \\
\hline
\textit{Extremes} & Midpoint &  \( \displaystyle \frac{1}{2} \left(\mathbf{x}_i + \mathbf{x}_j \right), \quad \text{where } (i, j) = \arg\max_{i, j} \|\mathbf{x}_i - \mathbf{x}_j\|_2 \) \\
& VecMaxNorm & \( \displaystyle \arg\max_{x_i \in \mathcal{X}} \|\mathbf{x}_i\|_2 \) \\
& Row max & \( \displaystyle \left( \max(x_{\cdot, j}) \right)_{j=1}^d \) \\
& LogSumExp & \( \displaystyle \log\left( \sum_{i=1}^{n} \exp(\mathbf{x}_i) \right) \) \\
\hline
\textit{Moments} & Variance & \( \displaystyle \frac{1}{n} \sum_{i=1}^{n} (\mathbf{x}_i - \bar{\mathbf{x}})^2 \) \\
& Skewness & \( \displaystyle \frac{1}{n} \sum_{i=1}^{n} \left( \frac{\mathbf{x}_i - \bar{\mathbf{x}}}{\sigma} \right)^3 \) \\

\bottomrule
\end{tabular}

    \label{tab:aggregators}
\end{table}

\begin{table}[h!]
    \centering
    \small
    \caption{Summary of the labels aggregation functions used in the aggregation-of-MNIST-images experiment. Each function operates on a set of scalars \( \{x_1, x_2, \dots, x_n\} \).}
    \begin{tabular}{ll}
\toprule
\textbf{Function Name} & \textbf{Formula} \\
\midrule
Mean & \( \displaystyle \frac{1}{n} \sum_{i=1}^n x_i \) \\
Median & \( \displaystyle \mathrm{median}(x_1, x_2, \dots, x_n) \) \\
Mode & \( \displaystyle \mathrm{mode}(x_1, x_2, \dots, x_n) \) \\
Geometric Mean & \( \displaystyle \left( \prod_{i=1}^n x_i \right)^{1/n} \) \\
Harmonic Mean & \( \displaystyle \frac{n}{\sum_{i=1}^n \frac{1}{x_i}} \) \\
Log-Mean-Exp & \( \displaystyle \log\left( \frac{1}{n} \sum_{i=1}^n e^{x_i} \right) \) \\
Midrange & \( \displaystyle \frac{\min_i x_i + \max_i x_i}{2} \) \\
Variance & \( \displaystyle \frac{1}{n} \sum_{i=1}^n (x_i - \bar{x})^2 \), \quad where \( \bar{x} = \frac{1}{n} \sum_{i=1}^n x_i \) \\
Maximum & \( \displaystyle \max_{i} x_i \) \\
Sum & \( \displaystyle \sum_{i=1}^n x_i \) \\
\bottomrule
\end{tabular}
    \label{tab:aggregation_functions}
\end{table}

\begin{table}[h!]
    \centering
    \small
    \caption{The hyper-parameters and loss function selection as used in our experiments.}
    \begin{tabular}{@{}lrrrrrrr@{}}
\toprule
\textbf{Dataset} & \textbf{Learning Rate} & \textbf{Batch Size} & \textbf{Epochs} & \textbf{Latent Dim} & \textbf{Loss} & \textbf{Performance Metric} \\
\midrule
Synthetic data   & $1.0 \times 10^{-4}$ & 32 & 20 & 16 & MSE &  MSE \\
                 & $5.0 \times 10^{-4}$ &  &     &  &     &  \\
                 & $1.0 \times 10^{-3}$ &  &     &  &     &  \\
                 & $5.0 \times 10^{-3}$ &  &     &  &     &  \\                 
\addlinespace
MNIST-set        & $1.0 \times 10^{-4}$  & 32 & 50 & 128 & MSE & MSE \\
                 & $5.0 \times 10^{-4}$  &  &     &  &     &  \\
                 & $1.0 \times 10^{-3}$ &  &     &  &     &  \\
\addlinespace
Omniglot-set     & $1.0 \times 10^{-4}$  & 32 & 50 & 128 & BCE & Ballanced ACC \\
                 & $5.0 \times 10^{-4}$  &  &     &  &     & \\
                 & $1.0 \times 10^{-3}$ &  &     &  &     &  \\
\addlinespace
ModelNet40       & $1.0 \times 10^{-4}$ & 32 & 50 & 128 & MSE & ACC \\
                 & $5.0 \times 10^{-4}$ &  &     & 256 &     &  \\
                 & $1.0 \times 10^{-3}$ &  &     &  &     &  \\
\bottomrule
\end{tabular}
    \label{tab:hyper_parameters}
\end{table}

\begin{table}[h!]
    \caption{Summary of encoder $\psi$ and predictor $\rho$ architectures used for individual modalities across the given datasets. Note, $dim$ indicates latent space dimension.}
    \label{tab:architectures}
    \begin{center}    
    \begin{small}
    \begin{tabular}{llll}
        \toprule 
        Dataset & Encoder $\phi$ & Estimator $\rho$ \\
        \midrule
        Synthetic & Input: $\mathbb{R}^{16}$ & Input: $\mathbb{R}^{dim}$ \\
        & FC $128$ + ReLU & FC $128$ + ReLU \\
        & FC $dim$ & FC $16$ \\
        \hline        
        MNIST-set & Input: $\mathbb{R}^{1 \times 28 \times 28}$ & Input: $\mathbb{R}^{dim}$ \\
        & Conv $16$, kernel $3 \times 3$, stride $1$, pad $1$ + ReLU & FC $256$ + ReLU \\
        & Conv $32$, kernel $3 \times 3$, stride $1$, pad $1$ + ReLU & FC $256$ + ReLU \\
        & Conv $64$, kernel $3 \times 3$, stride $1$, pad $1$ + ReLU & FC $1$  \\
        & MaxPool2d & \\
        & FC $dim$ & \\
        \hline
        Omniglot-set & Input: $\mathbb{R}^{1 \times 28 \times 28}$ & Input: $\mathbb{R}^{dim}$ \\
        & Conv $16$, kernel $3 \times 3$, stride $1$, pad $1$ + ReLU & FC $256$ + ReLU \\
        & Conv $32$, kernel $3 \times 3$, stride $1$, pad $1$ + ReLU & FC $256$ + ReLU \\
        & Conv $64$, kernel $3 \times 3$, stride $1$, pad $1$ + ReLU & FC $50$  & \\
        & MaxPool2d & \\
        & FC $dim$ & \\
        \hline
        ModelNet40 & Input: $\mathbb{R}^{3}$ & Input: $\mathbb{R}^{dim}$ \\
        & FC $256$ + ReLU & FC $256$ + ReLU \\
        & FC $256$ + ReLU & FC $256$ + ReLU \\
        & FC $dim$ & FC $40$ \\
        \bottomrule
    \end{tabular}
    \end{small}
    \end{center}
\end{table}

\begin{table}[h!]
    \centering
    \caption{Number of trainable parameters per model across all experiments.}
    \label{tab:num_params}
    \begin{tabular}{lrrrrrr}
\toprule
 & Synthetic & MNIST & MNIST-pretrained & Omniglot & ModelNet40 & QM9 \\
\midrule
Ablation 1 & 8480 &  &  &  &  &  \\
Ablation 2 & 15776 &  &  &  &  &  \\
Ablation 3 & 11696 &  &  &  &  &  \\
DeepSet &  & 196225 & 99073 & 208818 & 208808 & 200065 \\
NormDeepSet &  & 196225 & 99073 & 208818 & 208808 & 200065 \\
HPDS &  & 196226 & 99074 & 208819 & 208809 & 200066 \\
\textbf{QUANN-1} & 11696 & 196737 & 99585 & 209330 & 209320 & 134785 \\
SetTransformer &  & 263553 & 166401 & 269874 & 271144 & 267393 \\
\textbf{QUANN-2} &  & 246657 & 149505 & 252978 & 320552 & 316801 \\
FSPool &  & 196245 & 99093 & 208838 & 208036 & 134293 \\
SlotAtt &  & 281217 & 184065 & 287538 & 288808 & 285057 \\
LAF &  & 197761 & 100609 & 210354 & 210344 & 201601 \\
\bottomrule
\end{tabular}

\end{table}

\clearpage
\section{Supplementary figures}

\begin{figure}[ht]
    \centering
    \includegraphics[width=0.99\linewidth]{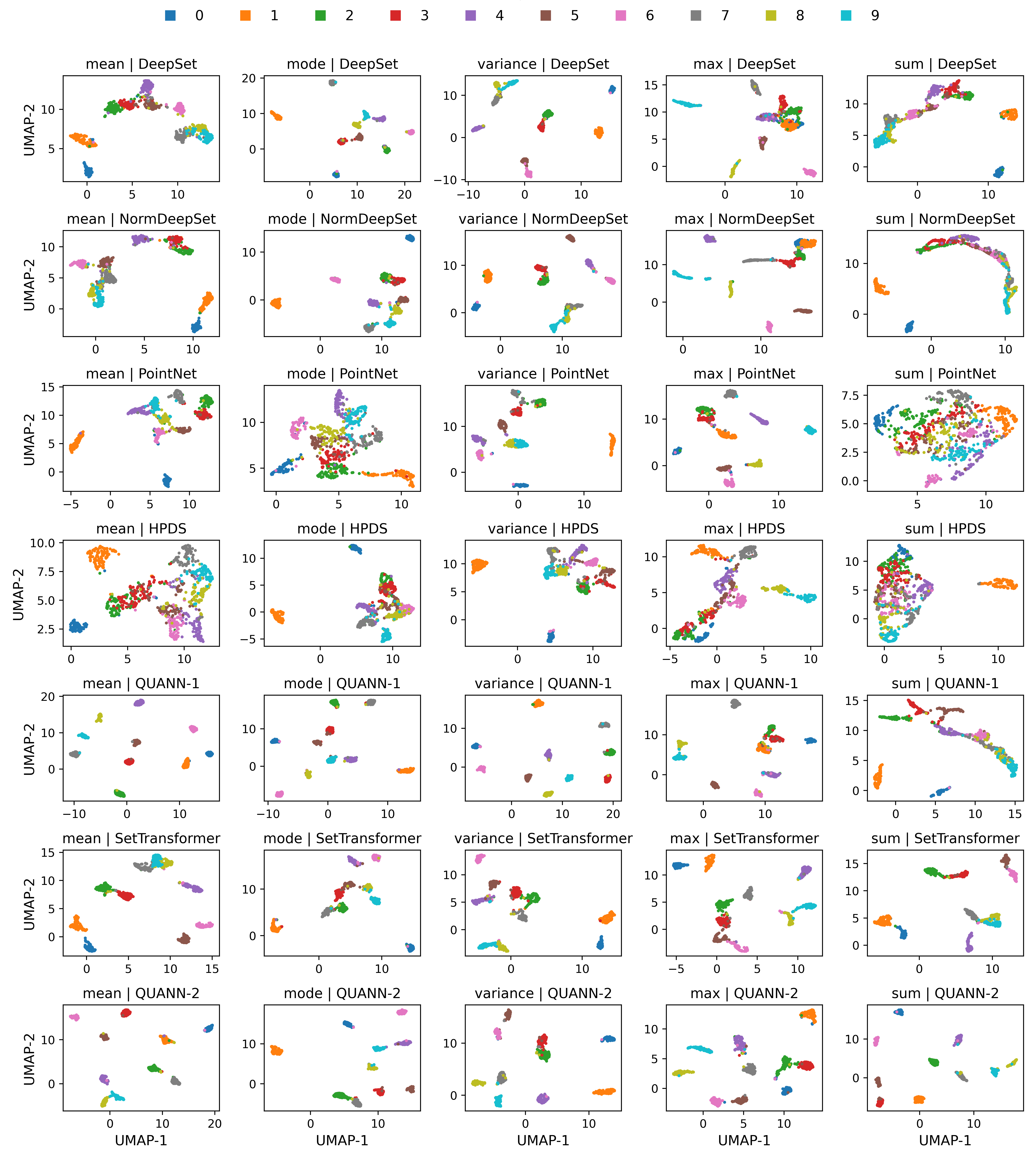}
    \caption{UMAP projections of the MNIST test set embeddings obtained from the encoder networks ($\phi$) trained under different Janossy methods to approximate various aggregation functions. Each point corresponds to an image, colored by its digit class label. Notably, only the encoders trained via SetTransformer, QUANN-1 (unary pooling), and QUANN-2 (binary pooling) produce embeddings that exhibit clear class-wise separation over all tasks, highlighting their superior ability to organize inputs in a semantically meaningful latent space.}
    \label{fig:mnist_umaps}
\end{figure}

\begin{figure}[ht]
    \centering
    \includegraphics[width=0.99\linewidth]{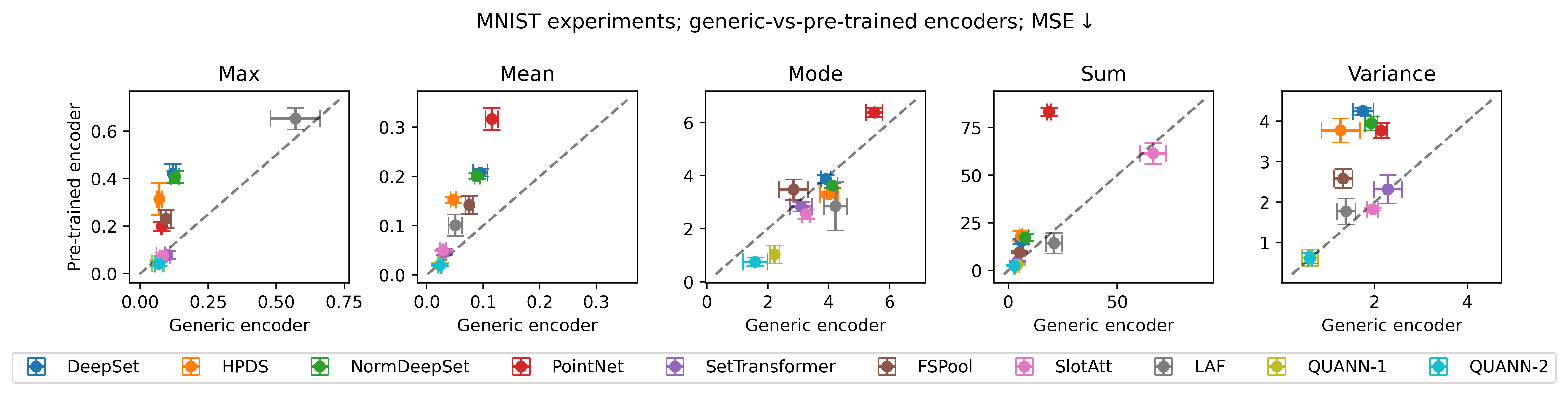}
    \caption{Set function models trained with a generic vs. a fixed, pre-trained MNIST image encoder. 
    Subplots correspond to a different aggregating functions, and show the model performance (MSE, lower is better), with error bars indicating standard deviations across experimental replicates. 
    Only QUANN-1 and -2 retain, or improve, their performance despite fixed encoder (points close to, or below, the identity line), whereas baselines experience a significant performance drop (points above the identity lines); despite comparable number of trainable parameters (cf. Table~\ref{tab:num_params}).}
    \label{fig:MNIST_comparison}
    \vspace{-\intextsep}
\end{figure}

\clearpage
\section{Proofs}
\begin{corollary}[Permutation invariance of Quasi-arithmetic Neural Networks]
\label{lemma:permutaiotn_invariance}
Let $\mathbf{X}$ be a finite set of elements as described in Section~\ref{sec:preliminaries} , and let $\hat{F}(\mathbf{X})$ be a set function approximation as described in Eq.~\ref{eq:quann}. 
Then, $\hat{F}$ is permutation invariant with respect to the elements of the input set $\mathbf{X}$.
\end{corollary}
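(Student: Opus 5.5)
The claim follows once we see that the only place the ordering of $\mathbf{X}$ enters Eq.~\ref{eq:quann} is through the index set of the inner sum, and that this sum is invariant under reindexing. Fix a finite input $\mathbf{X}=\{\mathbf{x}_1,\dots,\mathbf{x}_n\}$ and an arbitrary $\sigma\in\mathcal{S}_n$, and write $\sigma(\mathbf{X})$ for the reordered tuple $(\mathbf{x}_{\sigma(1)},\dots,\mathbf{x}_{\sigma(n)})$. I will show $\hat{F}(\sigma(\mathbf{X}))=\hat{F}(\mathbf{X})$ by following the composition $\rho\circ\psi^{-1}\circ(\text{normalized sum})\circ\psi\circ\phi$ from the inside out.

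The first step is to show that $\sigma$ induces a bijection on $P_k(\mathbf{X})$. A $k$-permutation of the reordered input is a tuple $(\mathbf{x}_{\sigma(i_1)},\dots,\mathbf{x}_{\sigma(i_k)})$ with distinct indices $i_1,\dots,i_k$. The map $(i_1,\dots,i_k)\mapsto(\sigma(i_1),\dots,\sigma(i_k))$ is a bijection on ordered $k$-tuples of distinct indices, with inverse given by $\sigma^{-1}$. Consequently the collection of tuples $\pi$ appearing in $P_k(\sigma(\mathbf{X}))$ is exactly the collection appearing in $P_k(\mathbf{X})$, each occurring once. In particular $|P_k(\sigma(\mathbf{X}))|=|P_k(\mathbf{X})|=n!/(n-k)!$, which depends only on $n$. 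For $k=1$ this simply says that the multiset $\{\mathbf{x}_i\}$ is unchanged.

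The second step uses the fact that $\phi$ and $\psi$ are fixed deterministic functions applied to each tuple $\pi$ separately. The summand $\psi(\phi(\pi))$ therefore depends only on $\pi$, not on its position in any enumeration. Since vector addition in the latent space is commutative and associative, reindexing a finite sum by a bijection leaves it unchanged:
\[
\sum_{\pi\in P_k(\sigma(\mathbf{X}))}\psi(\phi(\pi))=\sum_{\pi\in P_k(\mathbf{X})}\psi(\phi(\pi)).
\]
Dividing by the common normalizer, the arguments passed to $\psi^{-1}$ coincide. Because $\psi^{-1}$ and $\rho$ are well-defined functions, equal inputs give equal outputs, so $\hat{F}(\sigma(\mathbf{X}))=\hat{F}(\mathbf{X})$.

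There is essentially no obstacle here; the only care point is the first step. For $k\ge 2$, $\phi$ need not be symmetric in its arguments, for example under the attention-based encoding of QUANN-2. This causes no difficulty, because we sum over all ordered $k$-tuples rather than choosing a single ordering. Permuting $\mathbf{X}$ therefore only permutes the summands and never changes any individual $\phi(\pi)$.
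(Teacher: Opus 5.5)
Your proof is correct and rests on the same fact as the paper's proof. A finite sum is unchanged when its summands are reindexed by a bijection, and everything applied outside the sum ($\psi^{-1}$, $\rho$, the normalizer) is a fixed function of that sum.

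The difference is scope. The paper's proof only writes out the unary case, $\frac{1}{n}\sum_{i=1}^n \psi(\phi(\mathbf{x}_{\pi(i)}))$, so it effectively treats Eq.~\ref{eq:quann} as if $k=1$; it also has $\phi^{-1}$ where $\psi^{-1}$ is meant. Your first step is exactly what the general $k$-ary form, including QUANN-2, requires. You show that a reordering $\sigma$ induces the bijection $(i_1,\dots,i_k)\mapsto(\sigma(i_1),\dots,\sigma(i_k))$ on $P_k(\mathbf{X})$, and that $|P_k(\mathbf{X})| = n!/(n-k)!$ depends only on $n$. Your remark that $\phi$ need not be symmetric, because all ordered tuples are summed, is the right observation there.

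Both arguments implicitly assume that each summand $\phi(\pi)$ depends on the tuple $\pi$ alone, as Eq.~\ref{eq:quann} stipulates. An attention encoder whose softmax normalizes over the whole set would not literally fit that form, but that lies outside the statement.
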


\begin{proof}
For any permutation \( \pi \) of the indices \( \{1, \dots, n\} \), we have:
\[
\hat{F}(X') = \rho\left( \phi^{-1}\left( \frac{1}{n} \sum_{i=1}^n \psi(\phi(x_{\pi(i)})) \right) \right) = \rho\left( \phi^{-1}\left( \frac{1}{n} \sum_{i=1}^n \psi(\phi(x_i)) \right) \right) = \hat{F}(X),
\]
since summation is invariant under permutation. Thus, \( \hat{F} \) is permutation invariant.
\end{proof}

\begin{corollary}[Kolmogorov Mean with Linear Generator is Equal to Arithmetic Mean]
\label{corollary:arithmetic_mean}
Let $M_\psi(X)$ be Kolmogorov mean as defined in Equation~\ref{eq:kolmogorov}. 
If \( \psi: \mathbb{R} \to \mathbb{R} \) is a linear function of the form \( \psi(x) = w_1 x + w_2 \) for constants \( w_1 \neq 0 \) and \( w_2 \in \mathbb{R} \), then $M_\psi$ is equal to the arithmetic mean.

\end{corollary}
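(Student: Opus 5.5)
The plan is a direct computation from the definition in Equation~\ref{eq:kolmogorov}, specialised to the scalar linear generator $\psi(x) = w_1 x + w_2$ with $w_1 \neq 0$.

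\textbf{Step 1 (invertibility).} Since $w_1 \neq 0$, $\psi$ is a bijection of $\mathbb{R}$. Solving $z = w_1 x + w_2$ gives its inverse explicitly:
\[
\psi^{-1}(z) = \frac{z - w_2}{w_1}.
\]
So $M_\psi$ is well defined for every finite set $\mathbf{X} = \{x_1,\dots,x_n\}$.

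\textbf{Step 2 (inner average).} By linearity of the finite sum,
\[
\frac{1}{n}\sum_{i=1}^n \psi(x_i) = \frac{1}{n}\sum_{i=1}^n (w_1 x_i + w_2) = w_1 \bar{x} + w_2,
\qquad \bar{x} = \frac{1}{n}\sum_{i=1}^n x_i .
\]
The normalisation $1/n$ is what keeps the offset $w_2$ intact rather than turning it into $n w_2$. This is the only point where the mean, as opposed to the sum, matters.

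\textbf{Step 3 (apply the inverse).} Substituting into $\psi^{-1}$ gives
\[
M_\psi(\mathbf{X}) = \psi^{-1}(w_1\bar{x} + w_2) = \frac{w_1\bar{x} + w_2 - w_2}{w_1} = \bar{x},
\]
which is exactly the arithmetic mean.

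There is no real obstacle; the only care needed is to invoke $w_1 \neq 0$ for invertibility. I would also remark that the same argument works verbatim for vector-valued inputs with an invertible affine map $\psi(\mathbf{x}) = W\mathbf{x} + \mathbf{b}$, since the coordinatewise mean commutes with affine maps. This justifies the first row of Table~\ref{tab:psi_family_examples}.
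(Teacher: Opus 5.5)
Your proof is correct and matches the paper's argument step for step: you invert the affine generator using $w_1 \neq 0$, pull the $1/n$ average through by linearity to get $w_1\bar{x} + w_2$, and apply $\psi^{-1}$ to recover $\bar{x}$. Your added remark that the same computation works for an invertible affine map $\psi(\mathbf{x}) = W\mathbf{x} + \mathbf{b}$ is also valid, and it fits the paper's vector-valued setting better than the scalar statement the paper actually proves.
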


\begin{proof}
Let \( \psi(x) = w_1 x + w_2 \), with \( w_1 \neq 0 \), so that \( \psi \) is invertible with inverse $\psi^{-1}(y) = (y - w_2)/w_1$.
The Kolmogorov mean is then computed as:
\[
M_{\psi}(x_1, \dots, x_n)
= \psi^{-1}\left( \frac{1}{n} \sum_{i=1}^{n} \psi(x_i) \right)
= \psi^{-1}\left( \frac{1}{n} \sum_{i=1}^{n} (w_1 x_i + w_2) \right).
\]
Simplifying the expression inside the inverse:
\[
\frac{1}{n} \sum_{i=1}^{n} (w_1 x_i + w_2)
= w_1 \left( \frac{1}{n} \sum_{i=1}^{n} x_i \right) + w_2.
\]
and apply \( \psi^{-1} \):
\[
\psi^{-1}\left( w_1 \left( \frac{1}{n} \sum_{i=1}^{n} x_i \right) + w_2 \right)
= \frac{w_1 \left( \frac{1}{n} \sum_{i=1}^{n} x_i \right) + w_2 - w_2}{w_1}
= \frac{1}{n} \sum_{i=1}^{n} x_i.
\]
Therefore,
\[
M_{\psi}(X) = \frac{1}{n} \sum_{i=1}^{n} x_i.
\]
\end{proof}

\begin{proof}[Proof of therorem~\ref{theorem:quanns_uat}]
Let's decompose $\mathcal{U}$ as the following union:
\[
\mathcal{U} = \mathcal{U}_{\psi=\mathbb{I}} \cup \mathcal{U}_{\psi\neq\mathbb{I}}.
\]
where $\mathcal{U}_{\psi\neq\mathbb{I}}$ denotes set of functions that can be uniformly approximated by QUANNs of the form~\eqref{eq:quann} with $\psi=\mathbb{I}$, while $\rho$, $\phi$ are arbitrary neural networks.

By the representation result of ~\cite{bueno2021representation}, 
every permutation-invariant set function 
$F : \mathcal{P}_f(X) \to \mathcal{Y}$ 
that is uniformly continuous with respect to the Wasserstein metric 
can be uniform approximated by a function of the form
\[
F(X) = \rho\!\left(\frac{1}{|X|} \sum_{x \in X} \phi(x) \right),
\]
where $\rho$ and $\phi$ are arbitrary chosen neural networks.

This representation corresponds to the QUANN architecture specialized to $\psi = \mathbb{I}$
(i.e.\ the quasi-arithmetic mean reduces to a sum or mean when $\psi$ is the identity) and so $\mathcal{U}_{\psi=\mathbb{I}} = \mathcal{U}_W$

From the above we obtain that $\mathcal{U}
= \mathcal{U}_{\psi=\mathbb{I}} \cup \mathcal{U}_{\psi\neq\mathbb{I}}
= \mathcal{U}_W \cup \mathcal{U}_{\psi\neq\mathbb{I}}$
and therefore:
\[
\mathcal{U} \supseteq \mathcal{U}_W,
\]
\end{proof}

\begin{proposition}[Approximation of Mean-Decomposable Set Function]
\label{prop:mean_approximation}
Let \( X = \{x_1, \dots, x_n\} \) be a finite set as defined in Section~\ref{sec:preliminaries}, and consider a mean-decomposable set function F(x) of the form:
\[
F(X) = a\left( \frac{1}{n} \sum_{i = 1}^n b(x_i) \right),
\]
where:
\begin{itemize}
    \item \( b : \mathcal{X} \to \mathbb{R}^d \) is a continuous function;
    \item \( a : \mathbb{R}^d \to \mathbb{R}^{d'} \) is a Lipschitz continuous function with Lipschitz constant \( L_a \), for some \( d, d' \in \mathbb{N} \).
\end{itemize}

Let \( \hat{F}(X) \) denote the approximation of \( F(X) \) produced by the model described in Section~\ref{sec:quanns}, which constructs:
\begin{itemize}
    \item a continuous function \( \phi \approx b \),
    \item a generator \( \psi \) approximating any linear map \( x \mapsto w_1 x + w_2 \), and
    \item a continuous function \( \rho \approx a \).
\end{itemize}

Then, for any \( \epsilon > 0 \), if \( \psi \) approximates a linear function sufficiently well and both \( \phi \) and \( \rho \) approximate \( b \) and \( a \), respectively, within corresponding tolerances, the approximation error is bounded as:
\[
\|F(X) - \hat{F}(X)\| \leq \epsilon.
\]
\end{proposition}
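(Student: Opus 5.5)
The plan is a standard three-term triangle-inequality argument. It uses Corollary~\ref{corollary:arithmetic_mean} to handle the exactly linear generator, plus continuity to pass to an approximately linear one.

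Write $\bar b = \frac{1}{n}\sum_i b(x_i)$, $\bar\phi = \frac{1}{n}\sum_i \phi(x_i)$, and $M = \psi^{-1}\big(\frac{1}{n}\sum_i \psi(\phi(x_i))\big)$. Then $\hat F(X)=\rho(M)$, and we split
\[
\|F(X)-\hat F(X)\| \le \|a(\bar b)-a(\bar\phi)\| + \|a(\bar\phi)-a(M)\| + \|a(M)-\rho(M)\|.
\]
Each term will be made at most $\epsilon/3$.

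\textbf{Encoder term.} Since $\|\bar b-\bar\phi\|\le \frac1n\sum_i\|b(x_i)-\phi(x_i)\|\le \sup_{x}\|b(x)-\phi(x)\|$, the Lipschitz property gives a bound of $L_a\sup_x\|b-\phi\|$. So we require $\phi$ to approximate $b$ uniformly within $\epsilon/(3L_a)$. This presumes $\mathcal{X}$ is compact, or at least that the approximation is uniform on the relevant domain, as is implicit in "within corresponding tolerances"; I would state this assumption explicitly. The bound is independent of $n$, which is the key feature of mean pooling.

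\textbf{Generator term.} Let $\ell(z)=w_1 z+w_2$ with $w_1\neq 0$, applied componentwise or as an invertible affine map on $\mathbb{R}^d$. By Corollary~\ref{corollary:arithmetic_mean}, $\ell^{-1}\big(\frac1n\sum_i\ell(\phi(x_i))\big)=\bar\phi$ exactly, and the multivariate affine case follows identically. It then suffices to show that $\|M-\bar\phi\|$ is small when $\psi$ is uniformly close to $\ell$ and $\psi^{-1}$ is uniformly close to $\ell^{-1}$ on a compact set $K$ containing $\phi(\mathcal{X})$ and its convex hull. We set $u=\frac1n\sum_i\psi(\phi(x_i))$ and $v=\frac1n\sum_i\ell(\phi(x_i))$. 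Then $\|u-v\|\le\sup_K\|\psi-\ell\|=:\delta$, and
\[
\|M-\bar\phi\|\le \|\psi^{-1}(u)-\ell^{-1}(u)\| + \|\ell^{-1}(u)-\ell^{-1}(v)\| \le \sup\|\psi^{-1}-\ell^{-1}\| + \delta/|w_1|.
\]
Multiplying by $L_a$ gives the second term, which is below $\epsilon/3$ for sufficiently good approximation.

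\textbf{Main obstacle.} The hard point is justifying the control of $\psi^{-1}$. Uniform closeness of $\psi$ to $\ell$ does not by itself give closeness of the inverses. I would handle this by requiring both $\psi$ and $\psi^{-1}$ to approximate $\ell$ and $\ell^{-1}$; this is natural for RevNet, whose inverse is explicit. As a fallback, I would use the fact that $\ell^{-1}$ is Lipschitz with constant $1/|w_1|$, so that for $y=\psi^{-1}(u)$ we have $\|\ell^{-1}(u)-y\|=\|\ell^{-1}(u)-\ell^{-1}(\ell(y))\|\le \|u-\ell(y)\|/|w_1| = \|\psi(y)-\ell(y)\|/|w_1|\le\delta/|w_1|$. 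This fallback needs only that $y$ lies in the region where $\psi\approx\ell$ holds, which is ensured by taking that region large enough.

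\textbf{Estimator term.} We have $M$ within a bounded neighbourhood of $\operatorname{conv}\phi(\mathcal{X})$, which is compact. So requiring $\rho$ to approximate $a$ uniformly within $\epsilon/3$ on that compact set bounds the third term. Summing the three bounds yields $\|F(X)-\hat F(X)\|\le\epsilon$ uniformly in $X$ and $n$.
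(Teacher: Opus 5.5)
Your argument is correct. It uses the same ingredients as the paper's proof: Lipschitz continuity of $a$, Corollary~\ref{corollary:arithmetic_mean} for an exactly affine generator, and averaging the encoder error so the bound does not grow with $n$. Where you differ is in how the generator is treated.

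The paper splits into only two terms, $\|F(X)-\hat F(X)\|\le L_a\|\mu_b-\mu_{\psi,\phi}\|+\epsilon_\rho$. It then passes to the limit $\epsilon_\psi\to 0$, simply asserting that $\mu_{\psi,\phi}\to\frac{1}{n}\sum_i\phi(x_i)$. Its conclusion is therefore only a bound on $\lim_{\epsilon_\psi\to 0}\|\hat F(X)-F(X)\|$, and it assumes $\|a(z)-\rho(z)\|\le\epsilon_\rho$ for every $z\in\mathbb{R}^d$.

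You instead isolate the generator error as its own term and prove the quantitative estimate $\|M-\bar\phi\|\le 2\delta/|w_1|$. This is exactly the step the paper leaves unjustified, since closeness of $\psi$ to $\ell$ does not by itself control $\psi^{-1}$. You also localize every uniform approximation to an explicit compact set. The paper's route is shorter. Yours gives a non-asymptotic $\epsilon/3$ budget, uniform in $X$ and $n$, at the price of stating compactness of $\mathcal{X}$ explicitly.

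One step of your fallback needs an actual argument. In $d=1$, $y=\psi^{-1}(u)$ lies in the approximation interval by the min--max property of the quasi-arithmetic mean. For a non-coordinatewise $\psi$ such as a RevNet this is not automatic, though it does hold:

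\begin{itemize}
\item A RevNet $\psi$ is a homeomorphism of $\mathbb{R}^d$, so $\psi(\partial B)=\partial\psi(B)$ for $B=\bar B(\bar\phi,r)$.
\item Suppose $\psi$ is within $\delta$ of $\ell$ on $K\cup B$ with $r>2\delta/|w_1|$. Then $\|\psi(z)-v\|\ge |w_1|r-\delta>\delta$ for $z\in\partial B$, so $\psi(\partial B)$ misses $\bar B(v,\delta)$.
\item The connected ball $\bar B(v,\delta)$ contains both $u$ and $\psi(\bar\phi)\in\psi(B)$, and misses $\partial\psi(B)$. Hence it lies in $\psi(B)$, giving $y\in B$ as you need.
\end{itemize}

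Alternatively, a RevNet with zero residual branches is exactly the identity. This is the specialization $\psi=\mathbb{I}$ the paper itself uses in proving Theorem~\ref{theorem:quanns_uat}. For the architecture actually used, the generator term can therefore be made zero and your main obstacle disappears.
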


\begin{proof}

Let \( \mu_b := \frac{1}{n} \sum_{i=1}^{n} b(x_i) \) and \( \mu_{\psi, \phi} := \psi^{-1}\left( \frac{1}{n} \sum_{i=1}^{n} \psi(\phi(x_i)) \right) \). The approximation error can be expressed as:
\[
\|\hat{F}(X) - F(X)\| = \left\| a(\mu_b) - \rho(\mu_{\psi, \phi}) \right\|
\]

By the triangle inequality and the Lipschitz continuity of \( a \), along with the universal approximation property of \( \rho \) (i.e., \( \|a(z) - \rho(z)\| \leq \epsilon_\rho \) for any \( z \)), we have:
\begin{align}
\|\hat{F}(X) - F(X)\|
&= \left\| a(\mu_b) - \rho(\mu_{\psi, \phi}) \right\| \\
&\leq \left\| a(\mu_b) - a(\mu_{\psi, \phi}) \right\| + \left\| a(\mu_{\psi, \phi}) - \rho(\mu_{\psi, \phi}) \right\| \\
&\leq L_a \left\| \mu_b - \mu_{\psi, \phi} \right\| + \epsilon_\rho,
\end{align}
where \( L_a \) is the Lipschitz constant of \( a \).

We now focus on bounding \( \left\| \mu_b - \mu_{\psi, \phi} \right\| \). By definition:
\[
\left\| \mu_b - \mu_{\psi, \phi} \right\| = \left\| \frac{1}{n} \sum_{i=1}^{n} b(x_i) - \psi^{-1}\left( \frac{1}{n} \sum_{i=1}^{n} \psi(\phi(x_i)) \right) \right\|.
\]

From Corollary~\ref{corollary:arithmetic_mean}, if \( \psi \) approximates a linear function \( \psi(x) \approx w_1 x + w_2 \) such that \( \|\psi(x) - (w_1 x + w_2)\| \leq \epsilon_\psi \), then in the limit as \( \epsilon_\psi \to 0 \), the Kolmogorov mean reduces to the arithmetic mean:
\[
\lim_{\epsilon_\psi \to 0} \mu_{\psi, \phi} = \frac{1}{n} \sum_{i=1}^{n} \phi(x_i).
\]

Using this and the universal approximation property of \( \phi \), i.e., \( \|b(x) - \phi(x)\| \leq \epsilon_\phi \), we can bound:
\begin{align}
\lim_{\epsilon_\psi \to 0} \left\| \mu_b - \mu_{\psi, \phi} \right\|
&= \left\| \frac{1}{n} \sum_{i=1}^{n} b(x_i) - \frac{1}{n} \sum_{i=1}^{n} \phi(x_i) \right\| \\
&= \left\| \frac{1}{n} \sum_{i=1}^{n} (b(x_i) - \phi(x_i)) \right\| \\
&\leq \frac{1}{n} \sum_{i=1}^{n} \|b(x_i) - \phi(x_i)\| \\
&\leq \epsilon_\phi.
\end{align}

Substituting into the earlier inequality:
\[
\lim_{\epsilon_\psi \to 0} \|\hat{F}(X) - F(X)\| \leq L_a \epsilon_\phi + \epsilon_\rho.
\]

Finally, since both \( \epsilon_\phi \) and \( \epsilon_\rho \) can be made arbitrarily small, we can write:
\[
\lim_{\epsilon_\psi \to 0} \|\hat{F}(X) - F(X)\| \leq \epsilon,
\]
for an arbitrarily small \( \epsilon > 0 \).

\end{proof}
\begin{proposition}[Approximation of Max-Decomposable Set Function]
\label{prop:max_approximation}
Let \( X = \{x_1, \dots, x_n\} \) be a finite set as defined in Section~\ref{sec:preliminaries}, and consider a mean-decomposable set function F(x) of the form:
\[
F(X) = a\left(\max_{x \in X} b(x) \right),
\]
where:
\begin{itemize}
    \item \( b : \mathcal{X} \to \mathbb{R}^d \) is a continuous function;
    \item \( a : \mathbb{R}^d \to \mathbb{R}^{d'} \) is a Lipschitz continuous function with Lipschitz constant \( L_a \), for some \( d, d' \in \mathbb{N} \).
\end{itemize}

Let \( \hat{F}(X) \) denote the approximation of \( F(X) \) produced by the model described in Section~\ref{sec:quanns}, which constructs:
\begin{itemize}
    \item a continuous function \( \phi \approx b \),
    \item a generator \( \psi \) approximating any exponential function \( x \mapsto \exp(w x)\), s.t. $w > 0$, and
    \item a continuous function \( \rho \approx a \).
\end{itemize}

Then, for any \( \epsilon > 0 \), if \( \psi \) approximates an exponential family function sufficiently well and both \( \phi \) and \( \rho \) approximate \( b \) and \( a \), respectively, within corresponding tolerances, the approximation error is bounded as:
\[
\|F(X) - \hat{F}(X)\| \leq L_a \log{(n)} + \epsilon.
\]
\end{proposition}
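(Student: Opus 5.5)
The proof will mirror that of Proposition~\ref{prop:mean_approximation}, with the arithmetic mean replaced by the log-mean-exp (LogMeanExp) bounds on the maximum. Write $\mu_b := \max_{x\in X} b(x)$, taken coordinatewise, and $\mu_{\psi,\phi} := \psi^{-1}\big(\frac1n\sum_i \psi(\phi(x_i))\big)$. The triangle inequality, the Lipschitz continuity of $a$, and the uniform approximation $\|a(z)-\rho(z)\|\le\epsilon_\rho$ give
\[
\|\hat F(X)-F(X)\| \le L_a\|\mu_b-\mu_{\psi,\phi}\| + \epsilon_\rho,
\]
exactly as before. The remaining work is to bound $\|\mu_b-\mu_{\psi,\phi}\|$.

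First take the idealized generator $\psi(z)=\exp(wz)$, applied coordinatewise with $w>0$. Its inverse is $\psi^{-1}(y)=\frac1w\log y$, so
\[
\mu_{\psi,\phi} = \frac1w\log\Big(\frac1n\sum_i e^{w\phi(x_i)}\Big),
\]
coordinatewise. For each coordinate $j$, set $m_j=\max_i\phi_j(x_i)$. The sum $\frac1n\sum_i e^{w\phi_j(x_i)}$ lies in $[\frac1n e^{wm_j},\, e^{wm_j}]$. Hence $m_j - \frac{\log n}{w}\le (\mu_{\psi,\phi})_j\le m_j$, which is also consistent with the Kolmogorov mean lying between min and max. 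So the coordinatewise gap to $\max_i\phi(x_i)$ is at most $\log(n)/w$, which is the $\mathcal O(\frac1w\log n)$ entry of Table~\ref{tab:psi_family_examples}. Choosing $w\ge 1$ makes this at most $\log n$ per coordinate. I will either use the $\ell_\infty$ norm, or absorb $\sqrt d$ into $w$ by taking $w\ge\sqrt d$, to match the stated bound $L_a\log n$.

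Next, compare $\max_i\phi(x_i)$ with $\max_i b(x_i)$. The max is $1$-Lipschitz in sup norm, so $|\max_i\phi_j(x_i)-\max_i b_j(x_i)|\le\max_i|\phi_j(x_i)-b_j(x_i)|\le\epsilon_\phi$. This contributes $L_a\epsilon_\phi$, which can be made small.

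The main obstacle is passing from the idealized exponential to a neural $\psi$ with only $\|\psi-\exp(w\cdot)\|\le\epsilon_\psi$. I will work on a compact range $K$ containing the values of $\phi$. The exponential is bounded below by a positive constant there, so for $\epsilon_\psi$ small the averaged quantity stays in a compact region where $\exp(w\cdot)^{-1}=\frac1w\log$ is uniformly continuous. Then $\psi^{-1}$ is uniformly close to it, and continuity gives $\mu_{\psi,\phi}\to$ the idealized LogMeanExp as $\epsilon_\psi\to0$, in the same limiting sense used in Proposition~\ref{prop:mean_approximation}. Collecting terms gives
\[
\lim_{\epsilon_\psi\to0}\|F(X)-\hat F(X)\|\le L_a\frac{\log n}{w}+L_a\epsilon_\phi+\epsilon_\rho\le L_a\log n+\epsilon,
\]
with the sharper $1/w$ factor showing that for fixed finite $n$ the error can be driven arbitrarily small.
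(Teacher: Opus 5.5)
Your proposal is correct and follows essentially the same route as the paper: you apply the triangle inequality with the Lipschitz bound on $a$ and the $\epsilon_\rho$ term, then idealize $\psi$ as $\exp(w\cdot)$ so that the NKM becomes a log-mean-exp, bound its gap to the maximum by $\frac{1}{w}\log n$ plus an $\epsilon_\phi$ term, and pass to the limit $\epsilon_\psi \to 0$. Your version is slightly more careful in three places. You route through $\max_i \phi(x_i)$ via the 1-Lipschitz property of the max instead of inserting $\phi \geq b - \epsilon_\phi$ inside the exponential sum, and you give both sides of the sandwich, whereas the paper only proves the lower bound on the sum. You also make explicit the choice $w \geq 1$ (or $w \geq \sqrt{d}$ for the Euclidean norm), which is needed to go from the paper's final bound $L_a \frac{1}{w}\log n + \epsilon$ to the stated $L_a \log n + \epsilon$.
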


\begin{proof}

Let $M_b := \max_{x \in X}b(x)$ and $\mu_{\psi, \phi} := \psi^{-1}(\frac{1}{n}\sum_{i = 1}^n \psi(\phi(x_i)))$. The approximation error can be expressed as:

\begin{equation}    
    \|F(X) - \hat{F}(X)\| = \Big\| a(M_b) - \rho( \mu_{\psi, \phi}) \Big\|
\end{equation}

By the triangle inequality and the Lipschitz continuity of $a$, along with the universal approximation property of $\rho$ (i.e., $\|a(z) - \rho(z)\| \leq \epsilon_\rho$ for any $z$), we have:
\begin{align}    
    \|F(X) - \hat{F}(X)\| &= \| a(M_b) - \rho\big(\mu_{\psi, \phi}\big) \| \\
    &\leq \| a(M_b) - a(\mu_{\psi, \phi}) \| + \| a(\mu_{\psi, \phi}) + \rho(\mu_{\psi, \phi})\| \\
    &\leq L_a \| M_b - \mu_{\psi,\phi}\| + \epsilon_{\rho}
\end{align}
where $L_a$ is the Lipschitz constant of $a$.

We now focus on the bounding $\|M_b - \mu_{\psi,\phi}\|$. By definition:
\begin{equation}
    \| M_b - \mu_{\psi,\phi}\| = \Big\| M_b - \psi^{-1}\Big(\frac{1}{n}\sum_{x \in X} \psi(\phi(x))\Big) \Big\|
\end{equation}

When $\psi$ approximates an exponential function $\psi(x) \approx \exp{(w x)}$ such that $\| \psi(x) - \exp{(w x)}\| \leq \epsilon_{\psi}$ then in the limit as $\epsilon_{\psi} \rightarrow 0$, the above norm can be computed as:
\begin{align}
    \lim\limits_{\epsilon_{\psi}\to 0}{\| M_b - \mu_{\psi,\phi}\|} &= \Big\| M_b - \frac{1}{w}\log\Big(\frac{1}{n}\sum_{i = i}^n \exp(w \phi(x))\Big) \Big\| \\
    &= \Big\| \frac{1}{w}\log\Big(\frac{1}{n}\sum_{i = i}^n \exp(w \phi(x))\Big) - M_b \Big\| \\
    &= \Big\| \frac{1}{w}\log\Big(\frac{1}{n}\sum_{i = i}^n \exp(w \phi(x))\Big) - \frac{1}{w}\log{(\exp{(w M_b)})} \Big\| \\
    &= \Big\|\frac{1}{w}\log\Big(\frac{1}{n}\sum_{i = i}^n \exp(w \phi(x) - w M_b)\Big) \Big\| \\
\end{align}

We will now establish the lower bound on the sum inside the logarithm. We will leverage the approximation boundary $\|\phi(x) - b(x)\| \leq \epsilon_\phi$; and that $b(x) = M_b$ for at least one $x \in X$.

\begin{align}
    \frac{1}{n}\sum_{i = 1}^n \exp(w \phi(x) - w M_b) &= \frac{1}{n}\sum_{i = 1}^n \exp( w(\phi(x) - M_b))\\    
    &\geq \frac{1}{n}\sum_{i = 1}^n\exp(w (b(x) - M_b - \epsilon_\phi)) \\
    &\geq \frac{1}{n}\sum_{x \in X,\, b(x) = M_b} \exp(w(b(x) - M_b - \epsilon_\phi)) \\
    &\geq \frac{1}{n}\exp{(-w\epsilon_\phi)}
\end{align}

Given this bound we establish the upper bound on the limit of the above norm:
\begin{equation}
    \frac{1}{n} \sum_{i = 1}^n \exp(w \phi(x) - w M_b) \geq \frac{1}{n} \exp(-w \epsilon_{\phi}) \implies \lim\limits_{\epsilon_{\psi}\to 0}{\| M_b - \mu_{\psi,\phi}\|} \leq \frac{1}{w}\log{(n)} + \epsilon_{\phi}
\end{equation}

Substituting into the earlier inequality:
\begin{equation}
    \lim\limits_{\epsilon_{\psi}\to 0} \|\hat{F}(X) - F(X)\| \leq L_a (\frac{1}{w}\log{(n)} + \epsilon_{\phi}) + \epsilon_{\rho}
\end{equation}

Finally, since both $\epsilon_{\phi}$ and $\epsilon_{\rho}$ can be made arbitrarily small, we can write:
\begin{equation}
    \lim\limits_{\epsilon_{\psi}\to 0} \|\hat{F}(X) - F(X)\| \leq L_a \frac{1}{w}\log{(n)} + \epsilon
\end{equation}
for an arbitrarily small $\epsilon$ > 0.

\end{proof}
\begin{proposition}[Approximation of Sum-Decomposable Set Function]
\label{prop:sum_approximation}
Let \( X = \{x_1, \dots, x_n\} \) be a finite set as defined in Section~\ref{sec:preliminaries}, and consider a sum-decomposable set function F(x) of the form:
\[
F(X) = a\left(\sum_{i = 1}^{n} b(x_i) \right),
\]
where:
\begin{itemize}
    \item \( b : \mathcal{X} \to \mathbb{R}^d \) is continuous function bounded by the interval $(B_0, B_1)$;
    \item \( a : \mathbb{R}^d \to \mathbb{R}^{d'} \) is a Lipschitz continuous function with Lipschitz constant \( L_a \), for some \( d, d' \in \mathbb{N} \).
\end{itemize}

Let \( \hat{F}(X) \) denote the approximation of \( F(X) \) produced by the model described in Section~\ref{sec:quanns}, which constructs:
\begin{itemize}
    \item a continuous function \( \phi \approx w b\,, w > 0 \),
    \item a continuous function \( \rho \approx a \).
\end{itemize}

Then, for any \( \epsilon > 0 \), if \( \psi \) approximates any continuous invertible function and both \( \phi \) and \( \rho \) approximate functions \( w b \) and \( a \), respectively, within corresponding tolerances, the approximation error is bounded as:
\[
\|F(X) - \hat{F}(X)\| \leq L_a (n B_1 - w B_0) + \epsilon
\]
\end{proposition}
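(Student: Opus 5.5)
The argument follows the template of Propositions~\ref{prop:mean_approximation} and~\ref{prop:max_approximation}. Write $S_b := \sum_{i=1}^n b(x_i)$ and $\mu_{\psi,\phi} := \psi^{-1}\bigl(\frac{1}{n}\sum_{i=1}^n \psi(\phi(x_i))\bigr)$.

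First, use the triangle inequality, the Lipschitz continuity of $a$, and the uniform approximation $\|a(z)-\rho(z)\|\le\epsilon_\rho$:
\[
\|F(X)-\hat{F}(X)\| \le \|a(S_b)-a(\mu_{\psi,\phi})\| + \|a(\mu_{\psi,\phi})-\rho(\mu_{\psi,\phi})\| \le L_a\|S_b-\mu_{\psi,\phi}\| + \epsilon_\rho .
\]
This reduces the problem to bounding $\|S_b-\mu_{\psi,\phi}\|$, and that bound must hold uniformly over all continuous invertible $\psi$.

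Second, exploit that the Kolmogorov mean is internal. Since $\psi$ is continuous and invertible, it is strictly monotone (componentwise, or in the scalar picture the paper uses). Hence $\mu_{\psi,\phi}$ lies between $\min_i \phi(x_i)$ and $\max_i \phi(x_i)$, as noted in Section~\ref{sec:nkm}. With $\|\phi(x)-w\,b(x)\|\le\epsilon_\phi$ and $B_0<b<B_1$, this gives
\[
wB_0-\epsilon_\phi \le \mu_{\psi,\phi} \le wB_1+\epsilon_\phi .
\]
The target satisfies $nB_0 < S_b < nB_1$.

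Third, bound the gap by the extremes of these two intervals. The worst case is $S_b$ near $nB_1$ and $\mu_{\psi,\phi}$ near $wB_0$, which gives
\[
\|S_b-\mu_{\psi,\phi}\| \le nB_1 - wB_0 + \epsilon_\phi .
\]
Substituting yields $\|F(X)-\hat{F}(X)\| \le L_a(nB_1-wB_0) + L_a\epsilon_\phi + \epsilon_\rho$. Because $\epsilon_\phi$ and $\epsilon_\rho$ can be made arbitrarily small, the last two terms fold into $\epsilon$. Notably, the bound is independent of the particular $\psi$ learned, which is the point of the statement: the error grows linearly in $n$ whatever $\psi$ is.

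The main obstacle is the sign structure in the third step. The difference $S_b-\mu_{\psi,\phi}$ could in principle be more negative than $-(nB_1-wB_0)$, for instance when $B_0<0$ and $w$ is large, and the vector-valued case needs a norm rather than an interval argument. I would handle this by assuming $0\le B_0$ or $w\le n$, so that $nB_1-wB_0$ dominates both one-sided gaps. Alternatively I would take the bound coordinatewise and absorb dimension constants into the norm, possibly stating the bound as $\max\{nB_1-wB_0,\ wB_1-nB_0\}$, which reduces to the stated form under these assumptions.
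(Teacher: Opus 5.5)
Your three steps are the paper's proof: the same triangle-inequality/Lipschitz reduction, the same appeal to internality of the Kolmogorov mean to get $wB_0-\epsilon_\phi<\mu_{\psi,\phi}<wB_1+\epsilon_\phi$, and the same final line $\|S_b-\mu_{\psi,\phi}\|\le nB_1-wB_0+\epsilon_\phi$. The paper asserts that line without checking the other side. Your suspicion about it is right, and in fact the statement is false as written, but your repair is misstated. The two one-sided gaps are $nB_1-wB_0$ and $wB_1-nB_0$. Their difference is $(n-w)(B_0+B_1)$, so the stated form needs $(n-w)(B_0+B_1)\ge0$, e.g.\ $B_0\ge0$ \emph{and} $w\le n$; neither of your conditions suffices alone. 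The NKM of identical values equals that value for every invertible $\psi$, which gives counterexamples. Take $a=\rho=\mathrm{id}$, $\phi=wb$, $n=2$, $b(x_1)=b(x_2)=c$. With $(B_0,B_1)=(0,1)$, $w=10$, $c=0.99$, the error is $7.92$ against a claimed $2+\epsilon$. With $(B_0,B_1)=(-2,-1)$, $w=1$, $c=-1.99$, the error is $1.99$ against a claimed $0+\epsilon$. Your fallback $\max\{nB_1-wB_0,\,wB_1-nB_0\}$ is the correct scalar bound. Note also that an upper bound linear in $n$ does not show the error \emph{grows} with $n$. What actually exhibits the limitation is the identical-values observation: $\hat F$ is independent of $n$ on such sets.

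The vector-valued case is not a matter of norms or dimension constants. Your step ``continuous and invertible, hence strictly monotone componentwise'' is false for $\psi:\mathbb{R}^d\to\mathbb{R}^d$ with $d\ge2$. Internality of the NKM then fails outright, even relative to the convex hull. Take a single RevNet block with $f\equiv0$, i.e.\ $\psi(u,v)=(u,\,v+g(u))$, where $g$ is a ReLU bump with $g(0)=g(2)=0$ and $g(1)=-K$. It sends the inputs $(0,0)$ and $(2,0)$ to $M_\psi=(1,K)$. With $w=1$, $\phi=b$, $a=\rho=\mathrm{id}$, $(B_0,B_1)=(-1,3)$, this gives $\|S_b-\mu_{\psi,\phi}\|\ge K$, which is unbounded over $\psi$, while the claimed bound is $7+\epsilon$. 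So a coordinatewise interval argument is not available in general. Both your argument and the paper's, which relies on the scalar remark in Section~\ref{sec:nkm}, need $\psi$ restricted to act coordinatewise by strictly monotone scalar maps. The result should then be stated in the $\ell_\infty$ norm, because a $\sqrt d$ factor on the non-vanishing term $nB_1-wB_0$ cannot be folded into $\epsilon$.
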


\begin{proof}
Let $S_b := \sum_{i = 1}^n b(x_i)$ and $\mu_{\psi, \psi} = \psi^{-1}(\frac{1}{n}\sum_{i = 1}^n \psi (\phi (x)))$. The approximation error can be expressed as:

\begin{equation}    
    \|F(X) - \hat{F}(X)\| = \Big\| a(S_b) - \rho(\mu_{\psi, \phi}) \Big\|
\end{equation}

Same as in the proofs of Propositions~\ref{prop:mean_approximation} and~\ref{prop:max_approximation}:
\begin{align}    
    \|F(X) - \hat{F}(X)\| &= \| a(S_{b}) - \rho(\mu_{\psi, \phi}) \| \\
    &\leq \| a(S{b}) - a(\mu_{\psi, \phi}) \| + \| a(\mu_{\psi, \phi}) + \rho(\mu_{\psi, \phi})\| \\
    &\leq L_{a} \| S_{b} - \mu_{\psi,\phi}\| + \epsilon_{\rho}
\end{align}

Since $w b(x) - \epsilon_\phi < \phi(x) < w b(x) + \epsilon_\phi$ and $B_0 < b(x) < B_1$ then leveraging that Kolmogoov mean is neither larger than the largest input, nor smaller than the smallest input (cf. Section~\ref{sec:nkm}), we obtain: $w B_0 - \epsilon_\phi < \mu_{\psi,\phi} < w B_1 + \epsilon_\phi$. 

Given this bound we establish the upper bound on the above norm:
\begin{align}
    \| S_{b} - \mu_{\psi,\phi}\| &= \Big\| S_b - \psi^{-1}\big(\frac{1}{n}\sum_{x \in X} \psi(\phi(x))\big) \Big\| \\
    &\leq  \| n B_1 - w B_0 + \epsilon_\phi \|
\end{align}





Substituting into the earlier inequality:
\begin{equation}
    \|\hat{F}(X) - F(X)\| \leq L_{a} ((n B_1 - w B_0) + \epsilon_\phi) + \epsilon_\rho
\end{equation}

Finally, since both $\epsilon_\phi$ and $\epsilon_\rho$ are arbitrarily small, we can write:
\begin{equation}
    \|\hat{F}(X) - F(X)\| \leq L_{a} (nB_1 - w B_0) + \epsilon
\end{equation}
\end{proof}




\begin{proposition}[Expected value of the Approximation of Sum-Decomposable Set Function]
\label{prop:sum_approximation_extended}

    (Extending the Proposition~\ref{prop:sum_approximation}) Assuming cardinality $n$ of the set $X$ follows independent distribution $p(n)$, with $\bar{n}$ being its expected value, then, for any $\epsilon > 0$, if \( \psi \) approximates any invertible continuous function and both \( \phi \) and \( \rho \) approximate functions \( \bar{n} B_1/B_0 \cdot b \) and \( a \), respectively, within corresponding tolerances, the expected value of the approximation error is bounded as:
    \begin{equation}
        \mathbb{E} \big[ \|\hat{F}(X) - F(X)\| \big] \leq \epsilon
    \end{equation}

\end{proposition}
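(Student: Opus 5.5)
The plan is to reduce the statement to Proposition~\ref{prop:sum_approximation} by a specific choice of the scaling constant $w$, and then take expectations over the cardinality $n$. Proposition~\ref{prop:sum_approximation} already gives, for fixed $n$ and any $w>0$ with $\phi \approx w b$, $\rho \approx a$ and $\psi$ any continuous invertible generator, the pointwise bound
\[
\|F(X)-\hat F(X)\| \leq L_a\,(n B_1 - w B_0) + L_a\epsilon_\phi + \epsilon_\rho .
\]
The key observation is that $w$ may be chosen once, independently of $n$, by letting the encoder ``absorb'' the expected cardinality. I would set $w := \bar n B_1/B_0$, which is the scaling prescribed in the statement, so that $\phi \approx (\bar n B_1/B_0)\, b$. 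This is admissible because $\bar n$, $B_0$, $B_1$ are fixed constants. I would also assume $B_0>0$, so that $w>0$ as Proposition~\ref{prop:sum_approximation} requires. With this choice the data-dependent term becomes $L_a(nB_1 - \bar n B_1) = L_a B_1 (n-\bar n)$.

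Next I would take expectations with respect to $p(n)$. The independence assumption on $n$ is used here: the tolerances $\epsilon_\phi,\epsilon_\rho$ do not depend on $n$, and neither do $\phi,\psi,\rho$ (the scaling uses only $\bar n$). Linearity of expectation then gives
\[
\mathbb{E}\big[\|F(X)-\hat F(X)\|\big] \leq L_a B_1\,\mathbb{E}[n-\bar n] + L_a\epsilon_\phi + \epsilon_\rho = L_a\epsilon_\phi+\epsilon_\rho .
\]
Finally, invoking the universal approximation property of $\phi$ and $\rho$ as in Propositions~\ref{prop:mean_approximation}--\ref{prop:sum_approximation}, I would choose the tolerances so that $L_a\epsilon_\phi+\epsilon_\rho\leq\epsilon$. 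This yields the claimed inequality. It requires $\mathbb{E}[n]=\bar n<\infty$, which is implicit in the statement.

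The main obstacle is the step where the pointwise bound of Proposition~\ref{prop:sum_approximation} is averaged. That bound is a \emph{signed} quantity $L_aB_1(n-\bar n)$: for $n<\bar n$ it is negative, while the left-hand side is a norm. Taken literally, the expectation argument therefore only works if the bound of Proposition~\ref{prop:sum_approximation} is read as controlling a signed deviation, e.g.\ $\mu_{\psi,\phi}$ versus $S_b$ componentwise, before the norm is applied. To make this rigorous, my first attempt would be to redo the inner estimate directly, without going through Proposition~\ref{prop:sum_approximation}. I would take $\psi$ close to linear so that, by Corollary~\ref{corollary:arithmetic_mean}, $\mu_{\psi,\phi}\approx \frac{\bar n}{n}\cdot\frac{B_1}{B_0}\,\sum_i b(x_i)\cdot\frac{1}{1}$ up to the scaling. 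I would then bound $\|S_b-\mu_{\psi,\phi}\|$ by $|1-c\,\bar n/n|\,\|S_b\|\leq |n-c\bar n|\,\|b\|_\infty$ for a suitable constant $c$. Averaging this only gives $\mathbb{E}|n-\bar n|$ times a constant, i.e.\ a dispersion term. So genuine arbitrary smallness should require an additional concentration hypothesis on $p(n)$. I expect to state that caveat explicitly rather than hide it.
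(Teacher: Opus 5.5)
The main line of your proposal (fix $w=\bar nB_1/B_0$, average the bound of Proposition~\ref{prop:sum_approximation}, cancel the $n$-term by linearity) is exactly the paper's proof. The paper obtains $L_a(\bar nB_1-wB_0)+\epsilon$ and lets $w\to\bar nB_1/B_0$, and it never addresses the sign issue. Your objection is correct, and it is precisely where that argument fails. A quantity that is negative whenever $n<\bar n$ cannot be a pointwise upper bound on a norm, so averaging it proves nothing.

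The defect already sits in Proposition~\ref{prop:sum_approximation}. Its proof bounds $S_b-\mu_{\psi,\phi}$ only from above, by $nB_1-wB_0+\epsilon_\phi$, and silently drops the other side, $\mu_{\psi,\phi}-S_b<wB_1-nB_0+\epsilon_\phi$. In the scalar case the valid pointwise bound is $L_a\max(nB_1-wB_0,\,wB_1-nB_0)+L_a\epsilon_\phi+\epsilon_\rho$. This is at least $L_a\tfrac{(n+w)(B_1-B_0)}{2}$, so its average is never small. The ``signed deviation'' reading does not rescue it either: a one-sided bound on $\mathbb{E}[S_b-\mu_{\psi,\phi}]$ does not control $\mathbb{E}\|S_b-\mu_{\psi,\phi}\|$.

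Your repair, however, stops short of the real conclusion and conflates two constants. With the scaling the statement prescribes, your $c$ is $B_1/B_0$. The averaged bound is then $\|b\|_\infty\,\mathbb{E}|n-c\bar n|\ge\|b\|_\infty\,\bar n\,(c-1)$ by Jensen, not a multiple of $\mathbb{E}|n-\bar n|$. No concentration hypothesis on $p(n)$ makes this small.

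Nor is this an artifact of using an upper bound: the statement is false. Take $d=d'=1$, $a=\mathrm{id}$, $0<B_0<c<B_1$, $b\equiv c$, and $w=\bar nB_1/B_0$. Every continuous invertible $\psi:\mathbb{R}\to\mathbb{R}$ is strictly monotone, so $\mu_{\psi,\phi}\in[wc-\epsilon_\phi,\,wc+\epsilon_\phi]$ whatever $\psi$ is. Hence
\[
\mathbb{E}\,|\hat F(X)-F(X)|\;\ge\;c\,\mathbb{E}|n-w|-\epsilon_\phi-\epsilon_\rho\;\ge\;c\,\bar n\Big(\frac{B_1}{B_0}-1\Big)-\epsilon_\phi-\epsilon_\rho ,
\]
which stays bounded away from zero for every $p(n)$, deterministic $n$ included. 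So no proof of the statement as written can exist.

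What your computation does establish is a corrected version. With $w=\bar n$ and $\psi$ affine (Corollary~\ref{corollary:arithmetic_mean}), one gets $\mathbb{E}\|\hat F(X)-F(X)\|\le L_a\|b\|_\infty\,\mathbb{E}|n-\bar n|+L_a\epsilon_\phi+\epsilon_\rho$. The same constant-$b$ example shows that the dispersion term $c\,\mathbb{E}|n-\bar n|$ is unavoidable. The expected error is therefore small only when $p(n)$ is concentrated at $\bar n$, and arbitrarily small only when $n$ is almost surely constant.
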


\begin{proof}
    From the previous proposition we obtain that:
    \begin{equation}
        \|\hat{F}(X) - F(X)\| \leq L_{a} (nB_1 - w B_0) + \epsilon
    \end{equation}    

    The expected value of the approximation error is:
    \begin{equation}
        \mathbb{E} \big[ \|\hat{F}(X) - F(X)\| \big] \leq L_{a} (\bar{n} B_1 - w B_0) + \epsilon
    \end{equation}    

    When the scale $w$, learned by $phi$, approximates the $\bar{n} B_1/B_0$: $w \approx  \bar{n} B_1/B_0 \cdot b $, such that $\|w -  \bar{n} B_1/B_0 \cdot b \| < \epsilon_w$ then in the limit $\epsilon_w \rightarrow 0$, the above expected value is bounded by:
    \begin{equation}
        \lim\limits_{\epsilon_{w}\to 0}{\mathbb{E} \big[ \|\hat{F}(X) - F(X)\| \big]} \leq L_{a} \epsilon_w + \epsilon        
    \end{equation}

    Finally, since both $\epsilon$ and $\epsilon_w$ are arbitrarily small, we can write:
    \begin{equation}
        \lim\limits_{\epsilon_{w}\to 0}{\mathbb{E} \big[ \|\hat{F}(X) - F(X)\| \big]} \leq \epsilon        
    \end{equation}    
    for an arbitrarily small $\epsilon > 0$.
    
\end{proof}

\end{document}